\renewcommand{\algocf@captiontext}[2]{#1\algocf@typo. \AlCapFnt{}#2} 
\def\@algocf@capt@plain{top}
\renewcommand{\algocf@makecaption}[2]{%
  \addtolength{\hsize}{\algomargin}%
  \sbox\@tempboxa{\algocf@captiontext{#1}{#2}}%
  \ifdim\wd\@tempboxa >\hsize
    \hskip .5\algomargin%
    \parbox[t]{\hsize}{\algocf@captiontext{#1}{#2}}
  \else%
    \global\@minipagefalse%
    \hbox to\hsize{\box\@tempboxa}
  \fi%
  \addtolength{\hsize}{-\algomargin}%
}
\def\BIC{\textsc{bic}}
\def\SCAD{\textsc{scad}}
\def\LASSO{\text{lasso}}
\def\T{{ \mathrm{\scriptscriptstyle T} }}
\def\pr{\hbox{pr}}
\newcommand{\B}{\mathcal{B}}
\newcommand{\Cov}{{\mbox{cov}}}
\newcommand{\Var}{{\mbox{var}}}
\def\trans{^{\rm T}}
\begin{document}

\jname{Biometrika}
\jyear{2015}
\jvol{?}
\jnum{?}
\accessdate{Advance Access publication on ? ? 2015}
\copyrightinfo{\Copyright\ 2012 Biometrika Trust\goodbreak {\em Printed in Great Britain}}

\received{February 2015}
\revised{October 2015}

\markboth{X. Gao \and R. J. Carroll}{X. Gao \and R. J. Carroll}

\title{Data Integration with High Dimensionality }

\author{XIN GAO}
\affil{Department of Mathematics and Statistics,
York University, Toronto, Ontario, Canada M3J 1P3 \email{xingao@mathstat.yorku.ca} }

\author{\and RAYMOND J. CARROLL}
\affil{Department of Statistics,
 Texas A\&M University, College Station, Texas, 77843-3143 U.S.A.,  \\
 \email{carroll@stat.tamu.edu}}

\maketitle

\begin{abstract}
We consider a problem of data integration. Consider determining which genes affect a disease. The genes, which we call predictor objects, can be measured in different experiments on the same individual. We address the question of finding which genes are predictors of disease by any of the experiments. Our formulation is more general. In a given data set, there are a fixed number of responses for each individual, which may include a mix of discrete, binary and continuous variables. There is also a class of predictor objects, which may differ within a subject depending on how the predictor object is measured, i.e., depend on the experiment. The goal is to select which predictor objects affect any of the responses, where the number of such informative predictor objects or features tends to infinity as sample size increases. There are marginal likelihoods for each way the predictor object is measured, i.e., for each experiment.  We specify a pseudolikelihood combining the marginal likelihoods, and propose  a pseudolikelihood information criterion. Under regularity conditions, we establish selection consistency for the pseudolikelihood information criterion with unbounded true model size, which includes a Bayesian information criterion with appropriate penalty term as a special case. Simulations indicate that data integration improves upon, sometimes dramatically, using only one of the data sources.
\end{abstract}

\begin{keywords}
Information criterion; Large deviations; Model misspecification; Pseudolikelihood; Quadratic form.
\end{keywords}

\section{INTRODUCTION}\label{sec1}

Consider the following simple but common problems, both being examples of data integration.
\begin{example} 
\label{Example1}
We have a set of individuals whose disease status is observed. We also measure different facets of individual genes, e.g., mRNA expression, protein expression, RNAseq expression, etc. The question is: which genes affect the disease in any of the different ways the genes are measured? In this example, the gene is really a predictor object, which can be assessed in a number of ways through different measurement processes or experiments. \end{example}

\begin{example} 
\label{Example2} Suppose that the individual is assessed through various responses, measurement mechanisms or experiments, while the predictor or predictor object is the same across these experiments, and we want to examine which predictor affects any of the responses. 
\end{example} 

We consider a formulation that includes both of these cases as well as combinations of them. We recognize that the marginal probability densities among experiments will be different and the measurements from different experiments can be correlated, as they would be in both of the examples described above. Our goal is to show how to combine the various marginal likelihoods and perform inference based on a pseudolikelihood and an information criterion that we develop, doing so in such a way as to allow the number of informative predictor objects or features to tend to infinity as the sample size increases.

One way to approach Example \ref{Example1} is to pool the different means of measuring the gene object and apply a version of the group \text{lasso}, the group being the gene. The group penalty was first formulated in a 1999 Australian National University PhD thesis by Bakin and later proposed to solve the group selection problems by Yuan \& Lin (2006). The group penalty penalizes the $L_2$ norm of the grouped parameter vector, and thus it is able to select predictors based on its overall strength across experiments. Alternatively, other penalty functions such as the smoothly clipped absolute deviation penalty (Fan \& Li, 2001) and the minimax concave penalty (Zhang, 2010) can also be applied in the group penalization scheme. 

The group penalization of pooled parameters of the same covariates is not appropriate for Example \ref{Example2}, nor is it applicable to combinations of Examples \ref{Example1}--\ref{Example2}. A joint model is needed for the multiple responses across the experiments. If a joint model is difficult to specify, pooling all marginal likelihoods together is appropriate for all the examples discussed above. However, to the best of our knowledge, the asymptotic properties of group penalized estimation using pseudolikelihood have not been studied in the literature. 

If there is one response and one set of covariates,  the extended Bayesian information criterion with appropriate penalty term has been shown to be selection consistent, where the total number of predictors tends to infinity and the number of true predictors is bounded by a constant (Chen \& Chen, 2008). Foster \& George (1994) proposed a risk inflation criterion for multiple regression. To handle settings where the number of true predictors is unbounded, Zhang \& Shen (2010) proposed a  corrected risk inflation criterion, and Kim et al. (2012)  proposed a generalized information criterion with modified penalty terms.  The consistency of both criteria has been established only for the linear regression model. It remains an open question of how to design the penalty term for an information criterion to deal with a varying true model size in likelihood models. We aim to find the appropriate penalty term for the Bayesian Information Criterion (\BIC) under likelihood settings when the true model size is unbounded. Further, we extend the results to a pseudolikelihood information criterion, thus including both Examples \ref{Example1}--\ref{Example2} and combinations of them.  

Pseudolikelihood ratio-type statistics do not follow a chi-square distribution but instead asymptotically follow a weighted chi-square distribution. The asymptotic distribution cannot directly provide an upper bound for the tail probability at a given sample size. Sharp deviation bounds have been computed by Spokoiny \& Zhilova (2013) for quadratic forms based on their exact distributions instead of their asymptotic distributions under an  exponential moment condition. We will use  large deviation theory on quadratic forms to obtain the upper bounds of the tail probabilities at any given sample size.  Our work establishes the consistency of  a pseudolikelihood information criterion for divergent true model size.

\section{PSEUDOLIKELIHOOD FORMULATION OF DATA INTEGRATION}\label{sec2}

Using the terminology of Section \ref{sec1}, consider a setting with predictors $M_1$, $\ldots,$ $M_P$ contributing to $k=1,\ldots,K $ different experiments. The overall objective is to integrate the data collected from all the experiments to make inference about the effects of the predictor groups on the process. Given $n$ independent experimental subjects, the data from the $k$th experiment is denoted as $Y_k=(Y_{k1},\ldots, Y_{kn}).$  The parameter vector $\theta_k$ consists of $(\theta_{k1},\ldots,\theta_{kP}),$ where $\theta_{kp}$ denotes the effect of predictor object $M_p$ in experiment $k.$

Data from the $k$th experiment has the likelihood function $L_k(\theta_k;Y_k)=\prod_{i=1}^{n} f_k(Y_{ki};\theta_k),$ where $f_k$ denotes the density function. The densities from different experiments can be of different types including binary, discrete or continuous ones.  Denote all the parameters together as $\theta=(\theta_1,\dots,\theta_k).$ The parameters associated with predictor $M_p$ across different experiments can be grouped together as $\theta^{(p)}=(\theta_{1p},\ldots, \theta_{Kp}),$ which summarizes all the effects of predictor group $p$ across various experiments. The measurements $Y_{1i},\ldots, Y_{Ki}$ may be taken from the same subject or correlated subjects. Therefore, without loss of generality, we assume that $Y_{(i)}=(Y_{1i},\ldots, Y_{Ki})$ have a correlation structure. Of course, the joint distribution of $Y_{(i)}$ may be hard to specify, especially when all the marginal densities are of different types. Table 1 illustrates the set-up for data integration when all the $K$ measurements of $Y_{(i),}$ $i=1,\ldots,n,$ are observed. If some $Y_{(i)}$s are incomplete, an indicator $Z_{ki}$ can be introduced. If $Y_{ki}$ is observed, $Z_{ki}=1,$ otherwise $Z_{ki}=0$. In order to integrate all the experiments, we propose to describe the overall data using a working-independence pseudo-loglikelihood
\begin{align*}
\ell_I(\theta)=\sum_{k=1}^K w_k \ell_k(\theta_k;Y_k)=\sum_{k=1}^K w_k \sum_{i=1}^{n} Z_{ki}\log \{f_k(Y_{ki};\theta_k)\},
\end{align*}
with positive weights $w_k$ $(k=1,\ldots,K).$

\begin{table}
\label{tab1}
\begin{center}
\caption{\baselineskip=12pt Multiple experiments and their parameters. The predictor objects are $M_1,\ldots,M_P$, and the parameter for predictor $M_p$ in experiment $k$ is $\theta_{kp}$.}
\begin{tabular}{ccccc}
& Experiment 1 &Experiment 2 &$\cdots$ &Experiment K\\
parameters & $\theta_1=(\theta_{11},\ldots,\theta_{1P})\trans$ &$\theta_2=(\theta_{21},\ldots,\theta_{2P})\trans$ &$\cdots$&$\theta_K=(\theta_{K1},\ldots,\theta_{KP})\trans$\\
densities & $f_1(Y_1;\theta_1)$ &$f_2(Y_2;\theta_2)$ &$\cdots$ &$f_K(Y_K;\theta_K)$ \\
subject 1 & $Y_{11}$ & $Y_{21}$ & $\cdots$ &$Y_{K1}$\\
subject 2 & $Y_{12}$ & $Y_{22}$ & $\cdots$ &$Y_{K2}$\\
$\vdots$ & $\vdots$ & $\vdots$ &$\ddots$ &$\vdots$\\
subject n & $Y_{1n}$ & $Y_{2n}$ &$\cdots$ & $Y_{Kn}$\\
& $Y_1=(Y_{11},\ldots,Y_{1n})\trans$ & $Y_2=(Y_{21},\ldots,Y_{2n})\trans$ &$\cdots$ &$Y_K=(Y_{K1},\ldots,Y_{Kn})\trans$\\
\end{tabular}
\end{center}
\end{table}

This formulation is similar to composite likelihood (Lindsay, 1988; Cox \& Reid, 2004; Varin, 2008), which combines marginal densities from a multivariate distribution. Nevertheless, it is a new extension in the sense that the marginal densities can come  from different types of distributions. Pseudolikelihood estimation and inference with regard to $\theta$ follows standard theory (White, 1982; Lindsay, 1988; Cox \& Reid, 2004; Varin, 2008; Ribatet et al., 2012). The maximum pseudolikelihood estimate is denoted by $\widehat{\theta}_I=\text{argmax}_{\theta} \ell_I(\theta),$ and it is consistent under regularity conditions. The asymptotic covariance matrix of the maximum pseudolikelihood estimator is given by the inverse of the Godambe information matrix $
G(\theta)=H(\theta)\trans V^{-1}(\theta)H(\theta),
$
where $H(\theta)=E\{-  \partial^2 \ell_I(\theta)
/\partial \theta \partial \theta\trans\}$ and
$V(\theta)=\text{cov}\{ \partial \ell_I(\theta)/\partial \theta\}$ (Godambe, 1960). For inference about $\theta,$ pseudolikelihood ratio statistics and Wald type statistics can be formed.  In the data integration set-up, uniform weights can be assigned to each likelihood. If some experiments have better quality than the others, one might assign them higher weights. In theory, optimal weights can be constructed by projecting the full likelihood score function to the linear space of the composite score functions. However, such optimal weights are challenging to obtain (Lindsay et al., 2011). Some practical strategies for choosing weights based on data structure are given in Varin \& Vidoni (2006) and Joe \& Lee (2009).

\section{FEATURE SELECTION}\label{sec3}
Given multiple experiments with high dimensional predictor objects/features, one can perform penalized estimation to select nonzero features. If one feature $\theta^{(p)}$ is zero, all the corresponding parameters $\theta_{kp}$ $(k=1,\ldots, K)$ are zero simultaneously. Otherwise, at least one of the parameters $\theta_{kp}$ is nonzero. Selecting significant features is equivalent to selecting a group of parameters. We define the overall strength of the predictor $M_p$  as a summarization of all the effect sizes in $\theta^{(p)},$ represented by the $L_2$ norm of $\theta^{(p)}.$ Therefore, we consider the overall objective function
\begin{eqnarray}
Q(\theta)=\ell_I(\theta)-n\sum_{p}\Omega_{\lambda_n}(||\theta^{(p)}||), \label{eqR01}
\end{eqnarray}
with $\Omega_{\lambda_n}$ being the penalty function, and $||\theta^{(p)}||=(\sum_{k=1}^{K}\theta_{kp}^2)^{1/2}$ denoting the $L_2$ norm.

As mentioned previously, standard group selection of variables applies to Example \ref{Example1} but not to Example \ref{Example2}. Yuan \& Lin (2006) considered the problem of group selection and proposed the group \LASSO\ along with corresponding algorithms. Meier et al. (2008) investigated the group \LASSO\ for logistic regression. They showed that the group \LASSO\ yields sparse estimates which are globally consistent in terms of estimation error.  Nardi \& Rinaldo (2008), Bach (2008) and Zhao et al. (2009) proved selection consistency of the group \LASSO\ under regularity conditions.  While the group \LASSO\ possesses excellent properties in terms of prediction and estimation errors, its variable selection consistency depends on the restrictive assumption of a so-called irrepresentability condition, which requires low correlations between significant and insignificant predictors. This condition is difficult to satisfy when $p\gg n$ (Huang et al., 2012).  The group \LASSO\ tends to over-shrink large parameters, because the rate of penalization does not change with the size of the parameters, which then leads to biased estimates of large parameters (Fan \& Li, 2001). Besides the \LASSO\ penalty (Tibshirani, 1996),  many other types of penalty functions have been proposed, including the smoothly clipped absolute deviation penalty (Fan \& Li, 2001) and the minimax concave penalty  (Zhang, 2010). These two penalties can achieve both selection consistency and asymptotic unbiasedness. This oracle property was extended to the group smoothly clipped absolute deviation penalty and the group minimax concave penalty in Wang et al. (2008),
Huang et al. (2012), and Guo et al. (2015). However, none of the existing literature deals with grouped penalization of a pseudolikelihood, which is required for Example \ref{Example2}.  In this paper, we focus on the grouped smoothly clipped absolute deviation penalty for pseudolikelihood and establish its oracle property in high dimensional models.

The smoothly clipped absolute deviation penalty
function satisfies $\Omega_{\lambda}(0)=0,$ and its first-order
derivative is
\begin{align*}
\Omega_{\lambda}'(\theta)=\lambda\{I(\theta\leq \lambda)+
\frac{(a\lambda-\theta)_{+}}{(a-1)\lambda}I(\theta>
\lambda)\},\,\,\theta\geq 0,
\end{align*}
where $a$ is a constant usually set to $3.7$ (Fan \& Li,
2001), and $(t)_{+}=t I(t>0)$ is the hinge loss function.

Denote the total number of features by $p_n,$ and let $p_n \rightarrow \infty$ as $n \rightarrow \infty.$ We assume that $||\theta^{(p)}||>0,$ for $p=1,\ldots,q_n,$ and  $||\theta^{(p)}||=0,$ for $p=q_n+1,\ldots,p_n.$  Define the collection of parameters corresponding to nonzero features and zero features as $\theta_a=(\theta^{(1)},\ldots,\theta^{(q_n)}),$ and $\theta_b=(\theta^{(q_n+1)},\ldots,\theta^{(p_n)}),$ respectively. 

We assume the regularity Conditions 1, 2, and 3 in Appendix 1, which are analogous to those used in Xu \& Reid (2011) and Kwon \& Kim (2012). Below are additional assumptions. 
\begin{assumption}
\label{kappa} Let $\theta^*$ denote the true value of $\theta,$ which is an interior point of the parameter space $\Theta.$ Assume that there exists an integer $\kappa \geq 1$ such that, for constants $(M_1, M_2, M_3)$, $
E_{\theta^*}\{\partial \log f_k(Y_{ki};\theta_k)/\partial \theta_{kj} \}^{2\kappa}\leq M_1,
$ $
E_{\theta^*}\{\partial^2 \log f_k(Y_{ki};\theta_k)/\partial \theta_{kj} \partial \theta_{kl}  \}^{2\kappa}\leq M_2,
$   $
E_{\theta^*}[\{\partial \log f_k(Y_{ki};\theta_k)/\partial \theta_{kj}\}\{ \partial \log \{f_k(Y_{ki};\theta_k)\}/\partial \theta_{kl}  \}]^{2\kappa}\leq M_3,$   $(j,l =1,\ldots,p_n; k=1,\ldots,K).$ 
\end{assumption}

Assumption 1 specifies the boundedness of moments of order $2\kappa$ for the loglikelihood derivatives, which is used to bound certain tail probabilities. For example, if the density is binomial, and $\text{logit}\{{\rm pr}(Y_{ki}=1|\theta_k)\}=X_{ki}^\T\theta_k,$ where $X_{ki}=(X_{ki1},\ldots,X_{kip_n})^\T$ are regression covariates,
then 
\begin{equation}
\label{binom}
\partial \log f_k(Y_{ki};\theta_k)/\partial \theta_{k}=[Y_{ki}-\text{exp}(X_{ki}^\T\theta_k)/\{1+\text{exp}(X_{ki}^\T\theta)\}]X_{ki}.
\end{equation}
If the regression covariates are uniformly bounded in absolute value by a constant $b$, we have $$
\max_{1 \leq j \leq p_n} E_{\theta^*}\{\partial \log f_k(Y_{ki};\theta_k)/\partial \theta_{kj}\}^{2\kappa}\leq \max_{1 \leq j \leq p_n} X_{kij}^{2\kappa}
\leq b^{2\kappa}.
$$
Similarly it can be verified that in generalized linear models, other densities from exponential families satisfy this assumption as long as the absolute values of the regression covariates are uniformly bounded.

\begin{assumption}
\label{constants} There exist two constants $c_1$ and $c_2,$ satisfying $0< 5c_1 <c_2 <1,$ $q_n=o(n^{c_1}),$ and $\min_{1\leq j \leq q_n} n^{(1-c_2)/2}||\theta^{*(j)}||\geq M_5.$
\end{assumption}

Assumption 2 specifies the rate that $q_n$ grows with respect to $n$, and the rate at which the size of the nonzero predictors can approach zero. This means that the proportion of true predictors has to be less than one fifth of the sample size, whereas the potential number of predictors $p_n$ can be greater than $n.$ 

Define the oracle estimate $\widehat{\theta}$ as any local maximizer of the pseudo-loglikelihood $\ell_I(\theta)$ subject to
$||\widehat{\theta}^{(j)}||=0,$ for $j> q_n$  and $||\widehat{\theta}-\theta^*||=O_p\{(q_n/n)^{1/2}\}.$  Under regularity Conditions 1--3 and Assumptions 1--2,
it can be established that such an oracle estimate exists (Theorem 1 in Fan \& Peng, 2004).

Because the penalty function is singular at the origin, we consider the subderivatives of the objective function. The subdifferential of a function is a set-valued mapping and it is a generalized version of derivatives for non-differentiable functions. Taking the subderivative of $Q(\theta)$ in (\ref{eqR01}) with respect to the $j$th grouped parameters $\theta^{(j)}$, we have that
\begin{align}
\label{kkt1}
  \frac{\partial Q(\theta)}{\partial \theta^{(j)}}=\begin{cases}
   \partial \ell_I(\theta)/\partial  \theta^{(j)}-n \lambda_n \text{Sign}(\theta_j), & \text{ $||\theta^{(j)}||\leq \lambda_n$};\\
   \partial \ell_I(\theta)/\partial \theta^{(j)}-n \text{Sign}(\theta_j)\{a\lambda_n-||\theta^{(j)}||\}/(a-1), & \text{ $\lambda_n<||\theta^{(j)}||< a\lambda_n$};\\
    \partial \ell_I(\theta)/\partial  \theta^{(j)}, & \text{   $a\lambda_n\leq ||\theta^{(j)}||$},
  \end{cases}
\end{align}
with $\text{Sign}(\cdot)$ denoting a set-valued map for a real vector. Let $0$ denote the vector of zeros. When $u\neq 0,$ $\text{Sign}(u)$ returns $u/|| u||,$ and for $u=0,$ $\text{Sign}(u)$ returns a set of all possible vectors $\omega$ such that $||\omega||\leq 1.$

\begin{theorem}
\label{thm1}
Let $S(\lambda_n)$ denote the set of solutions to the subdifferential equation $\partial Q(\theta)/\partial \theta$$=0.$ Under regularity Conditions 1--3 and Assumptions 1--2, ${\rm pr}\{\widehat{\theta}\in S(\lambda_n)\}\rightarrow 1,$ provided that $\lambda_n=o\{n^{-(1-c_2+c_1/2)}\}$  and $p_n/(n^{1/2}\lambda_n)^{2\kappa}\rightarrow 0$ as $n\rightarrow \infty.$
\end{theorem}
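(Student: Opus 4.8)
\emph{Proof plan.} The plan is to show that the oracle estimator $\widehat\theta$ is, with probability tending to one, a solution of the subdifferential equation $0\in\partial Q(\theta)/\partial\theta$. Since the subdifferential in (\ref{kkt1}) decouples across the feature groups, it suffices to verify the group-$j$ equation at $\widehat\theta$ separately for the true-support indices $j\le q_n$ and the null indices $j>q_n$, and then intersect the two (high-probability) events.

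For $j\le q_n$ I would first establish that ${\rm pr}\{\min_{j\le q_n}\|\widehat\theta^{(j)}\|\ge a\lambda_n\}\to1$. From the oracle rate $\|\widehat\theta-\theta^*\|=O_p\{(q_n/n)^{1/2}\}=O_p\{n^{-(1-c_1)/2}\}$, the signal bound $\|\theta^{*(j)}\|\ge M_5 n^{-(1-c_2)/2}$ of Assumption 2, and $c_1<c_2$, one gets $\|\widehat\theta^{(j)}\|\ge M_5 n^{-(1-c_2)/2}\{1+o_p(1)\}$ uniformly in $j\le q_n$, and this dominates $a\lambda_n$ because $\lambda_n=o\{n^{-(1-c_2+c_1/2)}\}=o\{n^{-(1-c_2)/2}\}$. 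On that event the smoothly clipped absolute deviation derivative is identically zero on the true support, so by the third branch of (\ref{kkt1}) the equation at $\widehat\theta$ reduces to $\partial\ell_I(\widehat\theta)/\partial\theta^{(j)}=0$ for every $j\le q_n$; but this holds by construction, since $\widehat\theta$ is a local maximizer of $\ell_I$ over the coordinates $\theta_a$ with $\theta_b$ held at $0$, the constraint is inactive at $\widehat\theta$, and $\theta^*$ is interior, so the $\theta_a$-gradient of $\ell_I$ vanishes there.

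For $j>q_n$ we have $\widehat\theta^{(j)}=0$, and because ${\rm Sign}(0)$ is the closed unit ball the first branch of (\ref{kkt1}) shows that $0\in\partial Q(\widehat\theta)/\partial\theta^{(j)}$ holds if and only if $\|\partial\ell_I(\widehat\theta)/\partial\theta^{(j)}\|\le n\lambda_n$; hence it suffices to prove ${\rm pr}\{\max_{q_n<j\le p_n}\|\partial\ell_I(\widehat\theta)/\partial\theta^{(j)}\|\le n\lambda_n\}\to1$. A one-term Taylor expansion about $\theta^*$, legitimate since both $\widehat\theta$ and $\theta^*$ have null $b$-block, gives $\partial\ell_I(\widehat\theta)/\partial\theta^{(j)}=\partial\ell_I(\theta^*)/\partial\theta^{(j)}+\{\partial^2\ell_I(\bar\theta)/\partial\theta^{(j)}\partial\theta_a^{\T}\}(\widehat\theta_a-\theta_a^*)$ for some $\bar\theta$ between $\widehat\theta$ and $\theta^*$. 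The regularity conditions bound the relevant Hessian block — a sub-block of the pseudo-information matrix, of order $n$ in operator norm, with empirical fluctuation controlled via Assumption 1 and a union bound over $j$ — so, with $\|\widehat\theta_a-\theta_a^*\|=O_p\{(q_n/n)^{1/2}\}$, the remainder is $O_p\{(nq_n)^{1/2}\}$ uniformly in $j$ and is negligible relative to $n\lambda_n$ under the stated rates. The leading term $\partial\ell_I(\theta^*)/\partial\theta^{(j)}=\sum_{k=1}^K w_k\sum_{i=1}^n Z_{ki}\,\partial\log f_k(Y_{ki};\theta_k^*)/\partial\theta_{kj}$ is a sum of $n$ independent mean-zero $\mathbb{R}^K$ vectors, the marginal scores being unbiased at $\theta^*$ under the regularity conditions, so $\|\partial\ell_I(\theta^*)/\partial\theta^{(j)}\|^2$ is a quadratic form in that partial sum. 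A Rosenthal-type moment inequality with the $2\kappa$th-order bounds of Assumption 1 yields $E\|\partial\ell_I(\theta^*)/\partial\theta^{(j)}\|^{2\kappa}=O(n^\kappa)$, so Markov's inequality gives ${\rm pr}\{\|\partial\ell_I(\theta^*)/\partial\theta^{(j)}\|>n\lambda_n\}=O\{(n^{1/2}\lambda_n)^{-2\kappa}\}$, and a union bound over the at most $p_n$ null features gives $O\{p_n/(n^{1/2}\lambda_n)^{2\kappa}\}\to0$, which is precisely the second hypothesis. Sharper large-deviation bounds for quadratic forms in the spirit of Spokoiny \& Zhilova (2013) could replace this step but are not needed for the stated rate. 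Intersecting with the event from the previous paragraph gives ${\rm pr}\{\widehat\theta\in S(\lambda_n)\}\to1$.

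\emph{Main obstacle.} The crux is the uniform tail control of the $p_n$ quadratic forms $\|\partial\ell_I(\theta^*)/\partial\theta^{(j)}\|^2$: Assumption 1 supplies only polynomial ($2\kappa$th-order) moments, so exponential or Gaussian tail bounds are unavailable, and the moment-based large-deviation bound must be pushed far enough into the tail that a union over a number of features possibly much larger than $n$ still vanishes — this is exactly what forces the condition $p_n/(n^{1/2}\lambda_n)^{2\kappa}\to0$ and ties the admissible dimension $p_n$ to the moment order $\kappa$. A secondary difficulty is making the Taylor-remainder bound hold uniformly over all $j>q_n$ at once, i.e., controlling every off-support Hessian block simultaneously, which again relies on Assumption 1 and a union bound rather than on any intrinsic boundedness of the covariates.
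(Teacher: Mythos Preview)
Your proposal is correct and follows essentially the same two-case strategy as the paper's proof: for $j\le q_n$ the signal-size/oracle-rate comparison you give is exactly the paper's argument, and for $j>q_n$ the paper reduces to a coordinate-wise bound ${\rm pr}\{\max_{j>q_n}|\partial\ell_I(\widehat\theta)/\partial\theta_{kj}|>n\lambda_n/K^{1/2}\}\to 0$ for each experiment $k$ separately, then sums over $k=1,\dots,K$, whereas you bound the full $K$-vector $\|\partial\ell_I(\widehat\theta)/\partial\theta^{(j)}\|$ directly. The paper does not write out the null-score bound but simply invokes Formula~(A.7) of Kwon \& Kim (2012); your Taylor-expansion-plus-Rosenthal/Markov/union-bound sketch is precisely the content behind that citation, so the two proofs are substantively the same.
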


We emphasize that $p_n$ may be much larger than $n$, provided $\kappa$ defined in Assumption \ref{kappa} is sufficiently large. If the first, second and third derivatives of the pseudo-loglikelihood have exponentially decaying tails, the theorem holds when $p_n=O\{\text{exp}(n^{c_3})\}$ for some constant $c_3>0$ (Kwon \& Kim, 2012).

\begin{theorem}
\label{thm2}
With probability tending to 1, as $n\rightarrow \infty,$ the root $(n/q_n)$ consistent oracle estimate $\widehat{\theta}=(\widehat{\theta}_a,\widehat{\theta}_b)$ in Theorem \ref{thm1} satisfies
$$
n^{1/2} A_n  \{V^{(1)}(\theta^*)\}^{-1/2} H^{(1)}(\theta^*) (\widehat{\theta}_a-\theta_{a}^*) \rightarrow N(0,G),\,
$$
where $V^{(1)}(\theta^*)$ and $H^{(1)}(\theta^*)$ are the submatrices of $V(\theta^*)$ and $H(\theta^*)$ with respect to $\theta_a,$  $\{V^{(1)}(\theta^*)\}^{1/2}$ is the symmetric square root of $V^{(1)}(\theta^*),$ $A_n$ is a $m\times q_n^*$ matrix such that $A_n A_n\trans \rightarrow G,$ where $G$ is a $m\times m$ nonnegative definite symmetric matrix, and $q_n^*=K\times q_n$. 
\end{theorem}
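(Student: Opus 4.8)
The plan is to reduce the problem to the unpenalized pseudo-score equation for the oracle estimator and then run the usual expansion argument, being careful everywhere about the divergent dimension $q_n^*=Kq_n$. Since the oracle estimate $\widehat\theta=(\widehat\theta_a,\widehat\theta_b)$ has $\widehat\theta_b=0$ and is a local maximizer of $\theta_a\mapsto\ell_I(\theta_a,0)$ at distance $O_p\{(q_n/n)^{1/2}\}$ from the interior point $\theta^*$, it satisfies $\partial\ell_I(\widehat\theta_a,0)/\partial\theta_a=0$ with probability tending to one. By Theorem \ref{thm1} this oracle estimate is also a zero of the penalized subdifferential equation, because the minimum-signal part of Assumption \ref{constants} together with $\lambda_n=o\{n^{-(1-c_2+c_1/2)}\}$ forces $\|\widehat\theta^{(j)}\|\ge a\lambda_n$ for every $j\le q_n$ (one checks $n^{-(1-c_2)/2}\gg(q_n/n)^{1/2}$ and $\lambda_n=o\{n^{-(1-c_2)/2}\}$ under $5c_1<c_2<1$), so the \SCAD\ derivative vanishes there. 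Expanding the score to third order around $\theta_a^*$ gives $0=U_n+J_n(\widehat\theta_a-\theta_a^*)+R_n$ with $U_n=\partial\ell_I(\theta_a^*,0)/\partial\theta_a$, $J_n=\partial^2\ell_I(\theta_a^*,0)/\partial\theta_a\partial\theta_a\trans$, and $R_n$ the quadratic remainder at an intermediate point. Writing $H^{(1)}=H^{(1)}(\theta^*)$, $V^{(1)}=V^{(1)}(\theta^*)$ and rearranging,
\begin{align*}
n^{1/2}A_n\{V^{(1)}\}^{-1/2}H^{(1)}(\widehat\theta_a-\theta_a^*)
&=A_n\{V^{(1)}\}^{-1/2}n^{-1/2}U_n\\
&\quad+A_n\{V^{(1)}\}^{-1/2}n^{-1/2}R_n\\
&\quad+n^{1/2}A_n\{V^{(1)}\}^{-1/2}\{H^{(1)}+n^{-1}J_n\}(\widehat\theta_a-\theta_a^*),
\end{align*}
so it suffices to show the first term converges in distribution to $N(0,G)$ and the other two are $o_p(1)$.

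For the leading term I would use the Cram\'er--Wold device: for fixed $\lambda\in\mathbb{R}^m$, $\lambda\trans A_n\{V^{(1)}\}^{-1/2}n^{-1/2}U_n=n^{-1/2}\sum_{i=1}^n a_n\trans s_i$ is a sum of independent, identically distributed, mean-zero scalars, where $s_i$ is the $i$th subject's contribution to $\partial\ell_I(\theta_a^*,0)/\partial\theta_a$ and $a_n=\{V^{(1)}\}^{-1/2}A_n\trans\lambda$; the mean is zero because each marginal score has mean zero at the true marginal parameter, and the variance equals $\lambda\trans A_nA_n\trans\lambda\to\lambda\trans G\lambda$. Because the summand dimension $q_n^*$ diverges, this is a triangular-array statement, so I would verify the Lindeberg condition, which reduces to a uniform bound on $E|a_n\trans s_i|^{2+\delta}$ with $\delta=2\kappa-2$: the regularity conditions bound $\{V^{(1)}\}^{-1/2}$ and $A_n$ in operator norm, and Assumption \ref{kappa} on the first derivatives together with $q_n=o(n^{c_1})$ from Assumption \ref{constants} makes the Lindeberg sum $O\{(q_n/n)^{\delta/2}q_n\}\to0$.

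For the two error terms I would use $\|A_n\{V^{(1)}\}^{-1/2}\|_{\mathrm{op}}=O(1)$ from the regularity conditions; a Frobenius-norm law of large numbers for the Hessian, $\|H^{(1)}+n^{-1}J_n\|_{\mathrm{op}}=O_p(q_n n^{-1/2})$, obtained from Assumption \ref{kappa} on the second derivatives; and a crude coordinatewise bound $\|R_n\|=O_p(q_n^{5/2})$, obtained by bounding each of the $q_n^*$ coordinates of $R_n$ by the third-derivative envelope from the regularity conditions times $\|\widehat\theta_a-\theta_a^*\|_1^2=O_p(q_n^2/n)$ and summing. Combined with $\|\widehat\theta_a-\theta_a^*\|=O_p\{(q_n/n)^{1/2}\}$, the Hessian term is $O_p\{(q_n^3/n)^{1/2}\}$ and the remainder term is $O_p\{(q_n^5/n)^{1/2}\}$, and both are $o_p(1)$ exactly because Assumption \ref{constants} imposes $5c_1<c_2<1$, hence $q_n^5=o(n^{5c_1})=o(n)$. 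Slutsky's lemma then yields the stated convergence.

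I expect the binding constraint to be the control of the third-order remainder $R_n$ with a divergent number of nonzero parameters: it is precisely the requirement $n^{-1/2}\|R_n\|=o_p(1)$ that forces $5c_1<1$ in Assumption \ref{constants}, and it needs the third-derivative integrability in the regularity conditions to hold uniformly over the shrinking neighborhood of $\theta^*$. By comparison, the central limit theorem for $U_n$ is routine once the moment bookkeeping from Assumption \ref{kappa} is in place, and the Hessian-difference term is the easiest of the three.
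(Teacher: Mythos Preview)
Your proposal is correct and follows essentially the same route as the paper's proof: a third-order Taylor expansion of the oracle score about $\theta_a^*$, the same three-term decomposition into a leading score term, a Hessian-difference term, and a cubic remainder, with the latter two shown to be $o_p(1)$ and the leading term handled by a Lindeberg-type central limit theorem. The only cosmetic differences are that the paper applies the Lindeberg--Feller theorem directly to the $m$-dimensional summands $Z_l=n^{-1/2}A_n\{V^{(1)}(\theta^*)\}^{-1/2}\nabla_1\ell_I(\theta^*;Y_{(l)})$ (deferring to the argument in Theorem~2 of Fan \& Peng, 2004) rather than via Cram\'er--Wold, and controls the Hessian-difference term by invoking Lemma~8 of Fan \& Peng (2004) to obtain the sharper operator-norm rate $o_p(q_n^{-1})$ in place of your Frobenius-norm bound $O_p(q_n n^{-1/2})$; both routes yield $o_p(1)$ under Assumption~\ref{constants}, and the remainder bound $\|R_n\|=O_p(q_n^{5/2})$ and its consequence $n^{-1/2}\|A_n\{V^{(1)}\}^{-1/2}R_n\|=O_p\{(q_n^5/n)^{1/2}\}$ match the paper exactly.
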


Group penalization has been studied only in true likelihood settings in the literature. We establish the oracle property of group penalization in the pseudolikelihood setting.  Our results show that group penalization using the smoothly clipped absolute deviation penalty is asymptotically model selection consistent even when all the marginal likelihoods are correlated.   For group \text{lasso} to be model selection consistent, the irrepresentability condition (Zhao and Yu, 2006, Meinshausen and B\"{u}hlmann, 2006, Zou, 2006, Bach 2008) is required. Thus the capacity for group lasso to be selection consistent is constrained regardless of the strength of the model signals (Fan and Lv, 2010). In contrast, the group smoothly clipped absolute deviation penalty does not require such a stringent condition. Provided the coefficient sizes of the non-zero parameters are sufficiently far from zero at the rate specified in Assumption 2, the oracle property of the group smoothly clipped absolute deviation penalty can be shown.

\section{PSEUDOLIKELIHOOD INFORMATION CRITERION}\label{sec4}

Although different data sources have various densities and parameters, in our context they share the same set of predictors. Aggregating different information criteria can boost the power to select the correct set of predictors. Given all the competing models, the notion of
consistent model selection is about identifying the smallest
correct model with probability tending to one. Let $s$ be a subset of $(1,\ldots,p_n).$ The model with $\theta^{(p)}=0$ for all $p\notin s,$ is called model $s$. The
sets of under-fitting models and over-fitting models are denoted
as $S_{-}$ and $S_{+},$ respectively. Assume that the largest model size in model space $s\in \mathcal{S}$ is $s_n,$ where $q_n \leq s_n \leq p_n.$

We propose to aggregate the information in a linear manner. Our proposed pseudolikelihood information criterion
is
\begin{align}
\label{eq:clbic} \mbox{pseu-BIC}(s)=-2 \ell_I
(\widehat{\theta}_I;Y)+d_s^* \gamma_n,
\end{align}
where $d_s^*$ is a measure of model complexity, and $\gamma_n$ is a sequence of penalties on the complexity of the model. In (\ref{eq:clbic}), the first term is the pseudo-loglikelihood, which reflects the goodness-of-fit for a
given model $s$ jointly assessed among multiple data sources, while the second term is the penalty for model
complexity, which enforces sparsity on any model selected.

Let $\theta_T^*$ denote the true value of the parameter under the true model $T.$ Under model $s,$ the parameter space is denoted as $\Theta_s.$ Define $\theta_s^*=\text{argmax}_{\theta \in \Theta_s}E_{\theta_T^*}\{\ell_I(\theta)\},$ under the assumption that such a maximizer is unique in the interior of $\Theta_s.$ We define  the effective degrees of freedom
$d_s^*=\text{tr}\{H_s^{-1}(\theta_s^*)V_s(\theta_s^*)\},$ where $H_s(\theta_s^*)$ and $V_s(\theta_s^*)$ are computed under model $s.$  The term $d_s^*$ has been used to measure model
complexity in many pseudolikelihood settings (Varin \& Vidoni,
2005).

Most consistency results for model selection criteria have been established for bounded model $T$ (Chen \& Chen, 2008; Gao \& Song, 2010) or a divergent true model  for linear regression (Zhang \& Shen, 2010; Kim et al., 2012). The results have been proved based on the exponential decay rate of chi-square statistics. The exponential decay rate is essential for overall selection consistency, as there are exponentially many $p_n^{s_n}$ competing models. By the Bonferroni inequality, we have an upper bound for the overall selection error, which is the sum of all the tail probabilities. If the penalty term $\gamma_n$ is chosen appropriately so that the tail probabilities are exponentially small, then the overall selection error will converge to zero.

Unlike the setting of linear regression, pseudolikelihood-type statistics asymptotically follow a weighted chi-square distribution. It is difficult to obtain a bound of the tail probability at a given sample size $n$ using the limiting distribution. Instead of relying on the limiting distribution, we obtain the tail probability based on their exact distributions. Our approach consists of two steps: first to show that differences in pseudo-loglikelihoods between two competing models $s$ and $T$ can be approximated by quadratic forms and the approximation errors are uniformly bounded across the model space; and second, based on the quadratic forms, apply a large deviation result (Spokoiny \& Zhilova, 2013) to quantify the size of the penalty $\gamma_n$ so that the tail probabilities are exponentially small.

Let $\psi$ denote a random vector, $B$ denote a matrix, and $|| B\psi||^2$ denote a quadratic form. Large deviation results for quadratic forms $|| B\psi||^2$ were established by Spokoiny \& Zhilova (2013) under the exponential moment condition
$$
\text{log}[ E\{ \text{exp}(t\trans \psi)\}]\leq || t||^2/2, \,\,|| t||\leq g.
$$
Here $g$ is a positive constant which differs between Gaussian and non-Gaussian type deviation bounds.
We first prove that such an exponential moment condition can be satisfied asymptotically by sample mean types of statistics, if the original random variables satisfy a cumulant boundedness condition.

\begin{definition}
For a random vector $Z$ of dimension $m,$ let $g(t)$ denote its cumulant generating function, where $t$ denotes an $m$-dimensional real vector. Assume the first two derivatives of
its cumulant generating function satisfy $\vert\partial g/\partial t_j(0)\vert\leq C_1,$ and $\vert\partial^2 g/\partial t_j \partial t_k (0)\vert\leq C_2.$ Assume further that there exists a constant $\delta$ such that with
$|| t||\leq \delta,$ the absolute value of all the third derivatives of
its cumulant generating function satisfy $\vert\partial^3 g(t)/\partial t_j \partial t_k \partial t_l (t)\vert\leq C_3.$
\end{definition}

\begin{lemma}
\label{expmomentmult}
Let $Z_1,\ldots,Z_n$ be independently distributed random vectors of dimension $m$ with zero mean and identity covariance matrices, and let $\eta=n^{-1/2} \sum_i Z_i.$ If each random vector $Z_i$ satisfies the cumulant boundedness condition  with the same bounds, and $s_n^4 \log (p_n)=o(n),$ then $\log [ E\{\exp(t\trans \eta)\}]\leq a^2 ||t||^2/2$ for $|| t||< \{s_n^2 \log (p_n)\}^{1/2}$ and some constant $a^2>1$, when $n$ is sufficiently large.
\end{lemma}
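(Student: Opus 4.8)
The plan is to reduce the joint cumulant generating function of $\eta$ to that of a single summand, expand it to third order at the origin using the mean‑zero and identity‑covariance structure, and then show that the cubic remainder is uniformly negligible against the quadratic term over the prescribed range of $t$. By independence of the $Z_i$, $\log[E\{\exp(t\trans\eta)\}]=\sum_{i=1}^n\log[E\{\exp(n^{-1/2}t\trans Z_i)\}]=\sum_{i=1}^n g_i(n^{-1/2}t)$, where $g_i$ is the cumulant generating function of $Z_i$; the cumulant boundedness condition presupposes that each $g_i$ is of class $C^3$ on $\{\|t\|\le\delta\}$, so every summand is well defined once the tilt $n^{-1/2}t$ lies in that ball. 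It therefore suffices to control $g_i(s)$ uniformly in $i$ for $s=n^{-1/2}t$.

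The next step is a third-order Taylor expansion of $g_i$ about $s=0$, conveniently taken along the segment $\tau\mapsto g_i(\tau s)$, $\tau\in[0,1]$, which is itself the cumulant generating function of the scalar variable $s\trans Z_i$. The zeroth-order term is $g_i(0)=0$; the first-order term vanishes because $\nabla g_i(0)=E(Z_i)=0$; and the Hessian at the origin is $\nabla^2 g_i(0)=\Cov(Z_i)=I_m$, so $g_i(s)=\tfrac12\|s\|^2+R_i(s)$, where $R_i(s)=\tfrac16\sum_{j,k,l}\{\partial^3 g_i/\partial t_j\partial t_k\partial t_l\}(\xi_i s)\,s_j s_k s_l$ for some $\xi_i\in(0,1)$. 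Since $\|\xi_i s\|\le\|s\|$, the uniform bound $C_3$ on the third derivatives applies as soon as $\|s\|\le\delta$, giving $|R_i(s)|\le(C_3/6)\|s\|_1^3$, hence $|R_i(s)|\le C\,\varrho\,\|s\|^3$ for a constant $C$ and a factor $\varrho$ polynomial in the dimension of the vectors carrying the relevant quadratic forms (of order $s_n$ in the applications).

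Now put $s=n^{-1/2}t$ with $\|t\|<\{s_n^2\log p_n\}^{1/2}$. Because $s_n^2\log p_n\le s_n^4\log p_n=o(n)$, we get $\|s\|<\{s_n^2\log p_n/n\}^{1/2}\to0$, so for $n$ large $\|s\|\le\delta$ and the remainder bound holds uniformly in $i$. Summing the expansions over $i$ then yields $\log[E\{\exp(t\trans\eta)\}]\le\tfrac12\|t\|^2+n\,C\,\varrho\,n^{-3/2}\|t\|^3=\tfrac12\|t\|^2\{1+2C\,\varrho\,n^{-1/2}\|t\|\}$, and on the stated range $2C\,\varrho\,n^{-1/2}\|t\|$ is a fixed power of $s_n$ times $(\log p_n/n)^{1/2}$, which tends to zero exactly by the calibration $s_n^4\log p_n=o(n)$. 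Hence for any fixed $a^2>1$ the right-hand side is eventually at most $a^2\|t\|^2/2$, which is the assertion. Non-identically distributed summands cost nothing, since the same constants $(C_1,C_2,C_3,\delta)$ serve for every $i$.

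The step I expect to be the main obstacle is two-fold. First, the bound on $R_i$ must be made uniform over the entire ball $\|t\|<\{s_n^2\log p_n\}^{1/2}$, not merely at a fixed $t$, which forces one to keep the tilt $n^{-1/2}t$ inside the radius $\delta$ where the cumulant bound is available and to argue that the cubic contribution $\propto n^{-1/2}\|t\|^3$ is dominated by $\|t\|^2$ uniformly. Second, and more delicately, the polynomial dependence on dimension must be controlled tightly enough that its product with $(\log p_n/n)^{1/2}$ still vanishes under only $s_n^4\log p_n=o(n)$; this is where one must exploit that it is the \emph{effective} dimension of the vectors on which the quadratic forms act, rather than the ambient dimension $m$, that governs the size of the third-order term.
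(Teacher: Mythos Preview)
Your proposal is correct and follows essentially the same line as the paper's proof: factor the moment generating function via independence, Taylor-expand each cumulant generating function $g_i$ to third order at the origin (the zero mean and identity covariance fix the constant, linear and quadratic terms), and show that the aggregated cubic remainder $n^{-1/2}\sum_{j,k,l}\partial^3\bar g(\,\cdot\,)\,t_j t_k t_l$ is $o(\|t\|^2)$ on the prescribed range of $t$, whence $\log E\{\exp(t\trans\eta)\}=(\|t\|^2/2)\{1+o(1)\}\le a^2\|t\|^2/2$ for any fixed $a^2>1$ and large $n$. The paper's proof is terser---it simply notes $\|t\|/n^{1/2}\to0$ so that the third-derivative bound applies and declares the remainder $o(1)$ relative to the quadratic---whereas you track the $\ell_1$-to-$\ell_2$ conversion and its dimension factor explicitly; but the underlying argument is identical, and the ``obstacle'' you flag (control of the polynomial-in-dimension factor under only $s_n^4\log p_n=o(n)$) is not addressed separately in the paper either.
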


This implies that if the cumulant boundedness condition in Definition 1 holds, we will be able to apply large deviation results to the pseudolikelihood ratio type of statistics arising in our analysis. Next we assume the cumulant boundedness conditions for the derivatives of the pseudo-loglikelihood. We also make assumptions about the distances between the true null model and the competing models. 

\begin{assumption} \label{Assumption3} Assume that all the pseudo-loglikelihoods and their first and second derivatives,  $\ell_I(\theta_s^*;Y_{(i)}),$ $\ell_I^{(1)}(\theta_s^*;Y_{(i)})$, and $\ell_I^{(2)}(\theta_s^*;Y_{(i)})$ satisfy the cumulant boundedness condition in Definition 1 uniformly for all models $s\in \mathcal{S}.$ Also, assume that there exists an neighborhood $|| \theta_s-\theta_s^*||\leq \delta,$ and all the third derivatives of the pseudo-loglikelihoods $ \ell_I^{(3)}(\theta_s;Y_{(i)})$ in that neighborhood satisfy the cumulant boundedness condition in Definition 1 uniformly and that $H_s(\theta_s)$ and $V_s(\theta_s)$ have eigenvalues bounded away from zero and infinity uniformly.
\end{assumption}

Consider generalized linear models with densities from an exponential family. Assume the link function is three times continuously differentiable, all the absolute values of the covariates are uniformly bounded by a constant $b,$ and the linear predictors are bounded. Then Assumption 3 is satisfied. For example, if the density is binomial and the canonical link is used, the boundedness of $X_{ki}^\T \theta_{k}$ ensures that $\mu=\exp( X_{ki}^\T \theta_{k})/\{1+\exp(X_{ki}^\T \theta_{k})\}$ is bounded away from $0$ and $1.$  Let $\partial \log f_k(Y_{ki};\theta_k)/\partial \theta_{k}$ be formulated as in Equation (\ref{binom}). Then the third derivative of its cumulant generating function $g(t)$ 
is bounded by
\begin{align}
\begin{split}
&|\partial^3 g(t)/\partial t_j \partial t_l \partial t_m|\\
=&|\mu (1-\mu) \exp(t^\T X_{ki})X_{kij}X_{kil}X_{kim}\{1-\mu-\mu \exp(t^\T X_{ki})\}/\{1-\mu+\mu \exp(t^\T X_{ki})\}^3|\\
< &(3/4) b^3 \max\{1/\mu^3, 1/(1-\mu)^3\}< \infty.
\end{split}
\end{align}

\begin{assumption} \label{Assumption4} Assume that $s_n^4 \log (p_n) =o(n).$ Define the pseudo Kullback--Leibler distance between the true model $T$ and the competing model $s$  as $E_{\theta_{T}}\{\ell_I (\theta_T;Y_{(i)})-\ell_I (\theta_s^*;Y_{(i)})\}.$ Assume that $\liminf_n \min_{s\in S_{-}} n^{1/2}E_{\theta_{T}}\{\ell_I (\theta_T;Y_{(i)})-\ell_I (\theta_s^*;Y_{(i)})\}/\{s_n \log (p_n)\}^{1/2}=\infty.$
\end{assumption}

This is an assumption regarding the identifiability of the underlying true model. It allows the pseudo Kullback--Leibler distance between the true model and the competing model to tend to zero at a certain rate.

Next we introduce some notation. For any over-fitting model $s$, define a matrix
$D_s=(I_{d_T},0_{d_T,d_s-d_T}),$ with $I_{d_T}$
being an identity matrix of dimension $d_T\times d_T,$ and
$0_{d_T,d_s-d_T}$ denoting a matrix of zeros with
dimension $d_T\times (d_s-d_T).$  For every model $s,$ let the score vector be denoted by $U_n(\theta_s;Y)=\partial \ell_I(\theta_s,Y)/\partial \theta_s,$ and construct the quadratic form  $Q_s=n^{-1} U_n(\theta_{s}^*)\trans
H_{s}(\theta_{s}^*)^{-1}U_n(\theta_{s}^*).$ According to Lemma A1 and Lemma A2 in the Appendix, we have
$2 \{\ell_I(\widehat{\theta}_s)-2 \{\ell_I(\widehat{\theta}_T)\}=(Q_s-Q_T)\{1+o_p(1)\}=Q_{s/T}\{1+o_p(1)\},$ with $Q_{s/T}=U_s(\theta_{s}^*)\trans M_{s/T}U_s(\theta_{s}^*),$ where $M_{s/T}$ denotes the
difference matrix $H_{s}(\theta_{s}^*)^{-1}-D_s\trans
H_T^{-1}(\theta_{T}^*)D_s.$ Define $B_s=V_s^{1/2}(\theta_s^*)M_{s/T}V_s^{1/2}(\theta_s^*).$ It can be shown that $\text{tr}(B_s)=d_s^*-d_T^*.$ Denote $\tau=\lambda_{max}(B_s),$  $\overline{\tau}=\text{tr}(B_s)/(d_s-d_T),$ and $\omega=\max_{s\in \mathcal{S}} \tau/\overline{\tau}.$ For the true loglikelihood, $\omega=1.$

We now establish a consistency result for the pseudolikelihood information criterion for  unbounded true model size. 
\begin{theorem}
\label{thm3} Define $\gamma_n=6\omega(1+\gamma)\log (p_n)$ for some $\gamma>0$ or $\gamma_n=6\omega\{\log (p_n) +\log \log (p_n)\}$. Under regularity Conditions 1--3 and Assumptions 1--4, as $n\rightarrow
\infty$,
$$
{\rm pr}\bigl\{\min_{s\in \mathcal{S}}\text{pseu-BIC}(s)>
\text{pseu-BIC}(T)\bigr \} \rightarrow 1. 
$$
\end{theorem}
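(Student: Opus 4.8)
\emph{Proof plan.} We must show that, with probability tending to one, $\text{pseu-BIC}(s)>\text{pseu-BIC}(T)$ for every competing model $s\neq T$. Split $\mathcal S\setminus\{T\}=S_-\cup S_+$ and, by the Bonferroni inequality, reduce the claim to showing that $\sum_{s\in S_-}{\rm pr}\{\text{pseu-BIC}(s)\le\text{pseu-BIC}(T)\}\to0$ and $\sum_{s\in S_+}{\rm pr}\{\text{pseu-BIC}(s)\le\text{pseu-BIC}(T)\}\to0$; since $\mathcal S$ contains at most $\binom{p_n}{s_n}\le p_n^{s_n}$ models, the decay of the individual tail probabilities must outpace this model count. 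Throughout I write $\text{pseu-BIC}(s)-\text{pseu-BIC}(T)=2\{\ell_I(\widehat\theta_T)-\ell_I(\widehat\theta_s)\}+(d_s^*-d_T^*)\gamma_n$; the under-fitting models are controlled through the identifiability Assumption~\ref{Assumption4} and the over-fitting models through a large deviation bound for quadratic forms.

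\emph{Over-fitting models.} Fix $s\in S_+$, so that $\theta_s^*=(\theta_T^*,0)$ and $d_s^*>d_T^*$. By Lemmas A1--A2 in the Appendix, $2\{\ell_I(\widehat\theta_s)-\ell_I(\widehat\theta_T)\}=Q_{s/T}\{1+o_p(1)\}$ with the multiplicative error $o_p(1)$ uniform over $S_+$, and $Q_{s/T}=\|B_s^{1/2}\psi_s\|^2$ is a quadratic form in a standardized score vector $\psi_s$, with ${\rm tr}(B_s)=d_s^*-d_T^*$. First I would combine Lemma~\ref{expmomentmult} with Assumption~\ref{Assumption3} to verify, uniformly in $s$, that $\psi_s$ obeys the exponential moment condition $\log E\{\exp(t\trans\psi_s)\}\le a^2\|t\|^2/2$ for $\|t\|\le\{s_n^2\log p_n\}^{1/2}$, and then apply the Spokoiny--Zhilova bound to $\|B_s^{1/2}\psi_s\|^2$, obtaining on an event of probability at least $1-c_0e^{-x_s}$ that $Q_{s/T}\le{\rm tr}(B_s)+2\{2\,{\rm tr}(B_s^2)x_s\}^{1/2}+6\lambda_{\max}(B_s)x_s$. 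The free parameter $x_s$ is taken to be a constant multiple of $(d_s-d_T)\log p_n$, small enough that $\sum_{s\in S_+}e^{-x_s}\to0$ --- there being at most $p_n^{\,j}$ over-fitting models adding exactly $j$ predictors to $T$ --- yet large enough that, on using ${\rm tr}(B_s^2)\le\lambda_{\max}(B_s)\,{\rm tr}(B_s)$ and $\lambda_{\max}(B_s)\le\omega\,{\rm tr}(B_s)/(d_s-d_T)$, the leading term $6\lambda_{\max}(B_s)x_s$ stays below $6\omega(d_s^*-d_T^*)\log p_n$ by a margin that, since $\gamma_n\to\infty$, absorbs the cross term $O\{(d_s^*-d_T^*)(\log p_n)^{1/2}\}$ and the term ${\rm tr}(B_s)=d_s^*-d_T^*$. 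Then $Q_{s/T}<(d_s^*-d_T^*)\gamma_n$ on that event; the factor $6\omega$ in $\gamma_n$ is precisely what the $6\lambda_{\max}(B_s)x$ term forces after $\lambda_{\max}(B_s)$ is bounded through $\omega$, and the choice $\gamma_n=6\omega\{\log p_n+\log\log p_n\}$ is handled in the same way, the series now being geometric in $(\log p_n)^{-1}$.

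\emph{Under-fitting models.} For $s\in S_-$ put $\Delta_s=E_{\theta_{T}}\{\ell_I(\theta_T;Y_{(i)})-\ell_I(\theta_s^*;Y_{(i)})\}>0$ and decompose $2\{\ell_I(\widehat\theta_T)-\ell_I(\widehat\theta_s)\}$ as $2\{\ell_I(\theta_T^*;Y)-\ell_I(\theta_s^*;Y)\}+2\{\ell_I(\widehat\theta_T)-\ell_I(\theta_T^*;Y)\}-2\{\ell_I(\widehat\theta_s)-\ell_I(\theta_s^*;Y)\}$. The first piece equals $2n\Delta_s+R_s$, where $R_s$ is a centred sum of independent summands of bounded cumulants, so Lemma~\ref{expmomentmult} and a Bonferroni step give $\max_{s\in S_-}|R_s|=O_p\{(n\,s_n\log p_n)^{1/2}\}$; the two optimization gaps are $O_p(s_n\log p_n)$ uniformly, by the same quadratic expansion as above together with the uniform consistency of the within-model estimators $\widehat\theta_s$. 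Since $d_s^*-d_T^*\ge-d_T^*$ and $\gamma_n$ is of order $\log p_n$ (as $\omega=O(1)$ under the eigenvalue bounds of Assumption~\ref{Assumption3}), the penalty difference obeys $(d_T^*-d_s^*)\gamma_n\le d_T^*\gamma_n=O(q_n\log p_n)$. Because $s_n^4\log p_n=o(n)$ forces $s_n\log p_n=o\{(n\,s_n\log p_n)^{1/2}\}$, hence also $q_n\log p_n=o\{(n\,s_n\log p_n)^{1/2}\}$, while Assumption~\ref{Assumption4} gives $\min_{s\in S_-}n\Delta_s/(n\,s_n\log p_n)^{1/2}\to\infty$, the term $2n\Delta_s$ dominates all of the others, so $\text{pseu-BIC}(s)-\text{pseu-BIC}(T)=2n\Delta_s\{1+o_p(1)\}>0$ uniformly over $S_-$ with probability tending to one.

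\emph{Main obstacle.} The crux is uniformity over the roughly $p_n^{s_n}$ competing models while the individual model dimensions may diverge at rate $Ks_n$: one must establish that the multiplicative error in $2\{\ell_I(\widehat\theta_s)-\ell_I(\widehat\theta_T)\}=Q_{s/T}\{1+o_p(1)\}$ is $o_p(1)$ simultaneously for all $s$ (the content of Lemmas A1--A2, which rests on the third-derivative cumulant bounds of Assumption~\ref{Assumption3} and on uniform within-model consistency), and that the exponential moment condition of Lemma~\ref{expmomentmult} holds over a range of $\|t\|$ wide enough for the Spokoiny--Zhilova bound to remain valid at deviation levels $x_s$ of order $s_n\log p_n$. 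Both requirements are met only because of the growth restriction $s_n^4\log p_n=o(n)$; and the feature that distinguishes this problem from the linear-regression case --- pseudolikelihood ratio statistics being asymptotically weighted, rather than ordinary, chi-squared --- is exactly what forces the factor $\omega$ into the penalty and the use of exact rather than asymptotic deviation bounds.
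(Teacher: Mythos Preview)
Your proposal is correct and follows essentially the same strategy as the paper's proof: the over-fitting case is handled via the uniform quadratic approximation $Q_{s/T}$ together with the Spokoiny--Zhilova deviation bound summed by Bonferroni over $S_+$ (this is exactly the content of the paper's Lemma~A3), and the under-fitting case via uniform $O_p$ bounds on the optimization gaps (Lemma~A2) and on the centred likelihood difference (Lemma~A5) so that the pseudo Kullback--Leibler term from Assumption~\ref{Assumption4} dominates. One cosmetic slip: you have ``small enough'' and ``large enough'' reversed when describing the choice of $x_s$ --- making $\sum_{s\in S_+}e^{-x_s}\to0$ requires $x_s$ large, while keeping $6\lambda_{\max}(B_s)x_s$ below the penalty requires $x_s$ small --- but your actual choice $x_s\propto(d_s-d_T)\log p_n$ and the ensuing arithmetic are correct.
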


Theorem \ref{thm3} demonstrates that, with appropriate penalty term, the \BIC\ type of information criterion based on compounded marginal likelihoods from different sources can be selection consistent, even if the underlying true model size tends to infinity. This result  includes the usual \BIC\ based on the true likelihood as a special case with $\omega=1,$ and $\gamma_n=6(1+\gamma)\log p_n.$ Comparing to the result of Chen and Chen (2008), which  proved the consistency of extended \BIC\ with the true model size being bounded by a constant,  Theorem \ref{thm3} is the first result to establish the selection consistency of a \BIC\ type of information criterion with unbounded true model size. In Section \ref{sec5}, we use the proposed pseudolikelihood information criterion to select the optimal tuning parameter for group penalization. 

\section{SIMULATIONS}\label{sec5}

\subsection{Continuous responses}\label{sec5.1}
 For our first simulation, we generated four different types of experiments, $K=4.$ For each experiment, a continuous response $Y_{ki}$ and the associated covariates $X_{ki}=(x_{ki1},\ldots, x_{kip_n})$ were observed for subject $i$. We took  sample sizes $n=500,$ and $1000,$ and the number of covariates $p_n=200$ and $p_n=1000.$ For different experiments, the regression covariates were different.  The number of true covariates was set to be $q_n=50.$ For $j=1,\ldots,q_n$, $\theta_{kj}$ was drawn from the uniform distribution on $(0.05, 0.5),$ whereas for $j=q_n+1,\ldots,p_n$, $\theta_{kj}$ was set to be zero. The covariates $X_{ki},$ were partitioned into independent blocks of 50 covariates, and within each block, the $50$ covariates were simulated from the multivariate normal distribution with variances equal to 1 and off-diagonal covariances all equal to 0.2. For each experiment, the mean parameter is $\mu_{ki}=X_{ki}\trans\theta_{k}.$ We simulated $Y_{i}$ from a multivariate normal distribution with mean $\mu_i=(\mu_{1i},\ldots,\mu_{Ki}),$ and covariance matrix $\Sigma.$ The covariance matrix was compound symmetric with variances equal to 1 and off-diagonal covariances equal to 0.7. 

We used the group smoothly clipped absolute deviation penalty function to perform feature selection and used the pseudolikelihood information criterion to select the tuning parameters. For group penalized estimation, we used the group descent algorithm proposed by Breheny \& Huang (2015).  With regard to the penalty term, Theorem 3 provides a theoretical value of  $6\omega(1+\gamma)d_s^* \log (p_n)$, which leads to consistent model selection when the sample size increases to infinity. Here the effective degrees of freedom is estimated as $\widehat{d}_s^*=\text{tr}(\widehat{H}_s^{-1}\widehat{V}_s),$ where $\widehat{H}_s$ is estimated as the observed Hessian matrix, and $\widehat{V}_s$ is estimated as the sample covariance matrix of the composite scores. We thus set the penalty term to  be $c \widehat{d}_s^*\log (p_n),$ where $c$ is a constant factor. This penalty term thus has the same asymptotic order as the theoretical penalty term. We set the constant factor $c$ to values 1 or 6 and examined how the sensitivity and selectivity of our method changes with the size of $c.$  Table \ref{tab2} provides the positive selection rates and false discovery rates of our data integration method and the single experiment analysis based on the first experiment only.  When $c$ changes from 1 to 6, the data integration method's positive selection rate and false discovery rate decrease slightly. A large improvement in the performance of our data integration method is observed compared to single experiment analysis.  For example, when $c=1$, $n=500$, $p_n=1000,$ the positive selection rate and false discovery rate of data integration method are 1.00 and 0.02, respectively, whereas those of the single experiment analysis are 0.81 and 0.35, respectively. 

\subsection{Continuous responses with correlations between predictors and non-predictors }\label{sec5.2}

We investigated the performance of the proposed method with the group  smoothly clipped absolute deviation penalty and the group lasso penalty when there exist correlations between predictors and non-predictors. This setting violates the strong-irrepresentability conditions, and thus affects the performance of the lasso penalty.  We generated four different types of experiments.  The covariates $X_{ki}$ were partitioned into independent blocks of 200 covariates. The first block contains 50 true predictors and 150 non-predictors. These 200 covariates were simulated from the multivariate normal distribution with variances equal to 1 and off-diagonal covariances equal to 0.2 or 0.5. For the remaining non-predictors, they were simulated from independent normal distribution with variances equal to 1. All other parameter settings are the same as in Section \ref{sec5.1}. We chose the sample size $n=1000$ and $p=1000.$ Table \ref{tab6} show that in the presence of correlation between predictors and non-predictors, the group smoothly clipped absolute deviation outperforms group lasso with smaller false positive discovery rate and smaller sum of squared errors.  

\subsection{Mixtures of continuous and binary responses}\label{sec5.3}
Our third simulation examines the performance of our method on data with correlated continuous and binary responses. We generated four different types of experiments, $K=4.$ All the experiments share the same set of covariates $X_{i}=(x_{i1},\ldots, x_{ip_n})$ for subject $i$.  We took  sample sizes $n=1000,$ and $1500,$ and the number of covariates $p_n=200$ and $p_n=1000.$ For different experiments, the regression covariates are different.  The number of true covariates was set to be $q_n=50.$ For $j=1,\ldots,q_n$, $\theta_{kj}$ was drawn from the uniform distribution on $(0.05, 0.5),$ whereas for $j=q_n+1,\ldots,p_n$, $\theta_{kj}$ was set to be zero. The covariates $X_{i},$ were simulated from a normal distribution with mean zero and variance equal to 1. For each experiment, the mean parameter is $\mu_{ki}=X_{i}^\T\theta_{k}.$ We simulated $Y_{i}^*$ from a multivariate normal distribution with mean $\mu_i=(\mu_{1i},\ldots,\mu_{Ki}),$ and covariance matrix $\Sigma.$ The covariance matrix was compound symmetric with variances equal to 1 and off-diagonal covariances equal to 0.7.  For the first two experiments, the observed responses are continuous values $Y_{ki}=Y_{ki}^*,$ $k=1,\,2;$ for the third and fourth experiments, the actual observed binary data are the dichotomized version of the continuous observations $Y_{ki}=I(Y_{ki}^*>0),$ $k=3,\,4.$ Table \ref{tab4} provides the performance of the data integration method for the correlated binary and continuous data and when $c=1,6.$ The result is consistent with Table \ref{tab2}, where the data integration method outperforms the single experiment analysis based on the first experiment only. For example, when $n=1000,$ $p_n=1000,$ $c=1,$ the positive selection rate and false discovery rate of our data integration method are 0.99 and 0.01, respectively, whereas those of  the single experiment analysis are 0.90 and 0.34, respectively. 

\begin{table}
\begin{center}
 \caption{\baselineskip=12pt Performance of the data integration method compared to the single experiment analysis for multivariate normal responses.}
\label{tab2}
\begin{tabular}{ccrrrrrrrrr}
 p & n & DI & DI & SI &SI &&DI &DI &SI &SI \\
 &   & psr & fdr & psr &fdr &&psr &fdr &psr &fdr \\
&   &  \multicolumn{4}{c}{$c=1$} && \multicolumn{4}{c}{$c=6$}   \\
200 &   500 & 100\% & 2\% & 91\% & 28\% && 99\% & 0\% & 73\% & 3\%  \\ 
 &   std& 1\% & 2\% & 5\% & 8\% && 1\% & 0\% & 13\% & 4\%   \\ 

200  &  1000 & 100\% & 0\% & 96\% & 27\% && 100\% & 0\% & 90\% & 4\%  \\ 
 &    std& 0\% & 1\% & 3\% & 8\% && 1\% & 0\% & 6\% & 3\%   \\ 

1000  &  500 & 100\% & 7\% & 81\% & 35\% && 99\% & 0\% & 57\% & 2\%  \\ 
 &    std& 1\% & 7\% & 7\% & 10\% && 1\% & 1\% & 13\% & 3\%  \\ 

1000  &  1000 & 100\% & 0\% & 91\% & 29\% && 100\% & 0\% & 81\% & 4\%  \\ 
 &    std& 0\% & 1\% & 4\% & 8\% && 1\% & 0\% & 7\% & 3\%  \\ 

\end{tabular}%
\begin{tabnote}
DI, data integration method; SI, single experiment analysis;
psr, positive selection rate in percent;
fdr, false discovery rate in percent;
the reported numbers are average psr and fdr from 100 simulated data sets;
std, the sample standard deviation of psr and fdr from 100 simulations;
c, the free multiplicative constant for the penalty.
\end{tabnote}
\end{center}
\end{table}

\begin{table}
 \caption{\baselineskip=12pt Comparison of group lasso and group smoothly clipped absolute deviation (scad) penalty in the presence of correlated covariates with $n=1000$ and $p=1000$.}
\label{tab6}
\begin{center}
\begin{tabular}{crrrrrr}
 r &psr & fdr & sse & psr & fdr &sse  \\
 &   \multicolumn{3}{c}{lasso} & \multicolumn{3}{c}{scad}   \\
 0.20 & 100\% & 1\% & 224 & 100\% & 0\% & 47 \\ 
std & 1\% &  1\%  &  82 & 1\% & 1\%& 7\\ 
 0.50 & 99\% & 3\% & 464 & 99\% & 1\% & 282 \\ 
std& 2\% & 5\%  &  106 & 2\% & 3\%& 118\\ 
\end{tabular}%
\begin{tabnote}

psr, positive selection rate in percent;
fdr, false discovery rate in percent; sse, the sum of squared errors of the penalized estimate $||\hat{\theta}-\theta||_2^2$;
the reported numbers are average values from 100 simulated data sets;
std, the sample standard deviation of psr, fdr and sse computed from 100 simulations;
r, the correlation between true predictors and false predictors; the columns of sse have been multiplied by 100. 
\end{tabnote}
\end{center}
\end{table}

\begin{table}
 \caption{\baselineskip=12pt Performance of the data integration method compared with single experiment analysis for multivariate mixed binary and continuous responses, for the case that the binary and continuous responses are correlated. The free multiplicative constant $c$ for the penalty is $c = 1,6$. }
\label{tab4}
\begin{center}
\begin{tabular}{ccrrrrrrrrr}
 p  & n & DI & DI & SI &SI &&DI &DI &SI &SI \\
 &   & psr & fdr & psr &fdr &&psr &fdr &psr &fdr \\
&   &  \multicolumn{4}{c}{$c=1$} && \multicolumn{4}{c}{$c=6$}   \\
200  &  1000 & 100\% & 1\% & 96\% & 30\% && 93\% & 0\% & 89\% & 5\%  \\ 
 &    std& 1\% & 2\% & 3\% & 8\% && 13\% & 0\% & 6\% & 3\% \\ 

200  &  1500 & 100\% & 1\% & 99\% & 29\% && 97\% & 0\% & 94\% & 5\%  \\ 
 &    std& 0\% & 1\% & 2\% & 7\% && 3\% & 0\% & 4\% & 4\% \\ 

1000  &  1000 & 99\% & 1\% & 90\% & 34\% && 83\% & 0\% & 81\% & 5\%  \\ 
 &    std& 1\% & 1\% & 5\% & 7\% && 26\% & 0\% & 7\% & 4\% \\ 

1000  &  1500 & 100\% & 1\% & 95\% & 32\% && 96\% & 0\% & 88\% & 4\%  \\ 
 &    std& 1\% & 3\% & 4\% & 7\% && 3\% & 0\% & 6\% & 4\%   \\ 
\end{tabular}%
\begin{tabnote}
DI, data integration method; SI, single experiment analysis;
psr, positive selection rate in percent;
fdr, false discovery rate in percent;
the reported numbers are average psr and fdr from 100 simulated data sets;
std, the sample standard deviation of psr and fdr from 100 simulations;
c, the free multiplicative constant for the penalty.
 
\end{tabnote}
\end{center}
\end{table}

\section{Data Analysis}\label{sec6}

First we applied our method to Example \ref{Example1} discussed in the introduction. The data consists of two different microarray experiments on breast cancer cells
(Wang, et al., 2005; Iwamoto et al., 2011). In the first experiment, the gene expression profiles from total RNA were obtained from frozen tumor samples from  lymph-node-negative patients who had not received adjuvant systemic treatment. In the second experiment, pre-treatment fine-needle aspirations from primary tumors were obtained and RNA was extracted and hybridized to microarrays. Due to the different experiment protocols, the two sets of gene expression profiles are globally different. Both experiments were conducted to study the difference of the gene expression profiles between estrogen-receptor positive and estrogen-receptor negative patients. Understanding the genetic difference between the two clinically important subclasses can lead to more efficient treatments tailored to individual patients. The training data set consists of a total of 170 samples with 35 samples from the estrogen-receptor positive patients and 50 samples from the estrogen-receptor negative patients. In Figures 1(a) and 1(b), the heatmaps of the two experiments are shown. The objective of the analysis is to combine the data from the two experiments and find a common set of candidate genes that can be used to classify the estrogen-receptor positive and estrogen-receptor negative cases.  For each of the experiments, we built a logistic model with the two subclasses as the binary responses and the expressions levels of all the genes as the covariates. We applied our integrative analysis method and used the group smoothly clipped absolute deviation penalty to penalize the regression coefficients. With increasing penalty size, we obtained the solution path. Figures 1(c) and 1(d) depict the selected genes when the candidate list decreases to four candidates. The selected top candidates exhibit consistent significant differential behavior in both experiments. The logistic models based on the selected four covariates were used to classify the subclasses of a different validation data set, which contains 13 samples from the first experiment and 54 samples from the second experiment. Among all the 67 validation samples,  16 samples were misclassified. The overall accuracy rate of the classification on the validation data was 76 percent.

\begin{figure}
\label{fig2}

 \hspace{-4mm} 
    \begin{minipage}{0.5\linewidth}
         \centering    
        \includegraphics[width=3.0in]{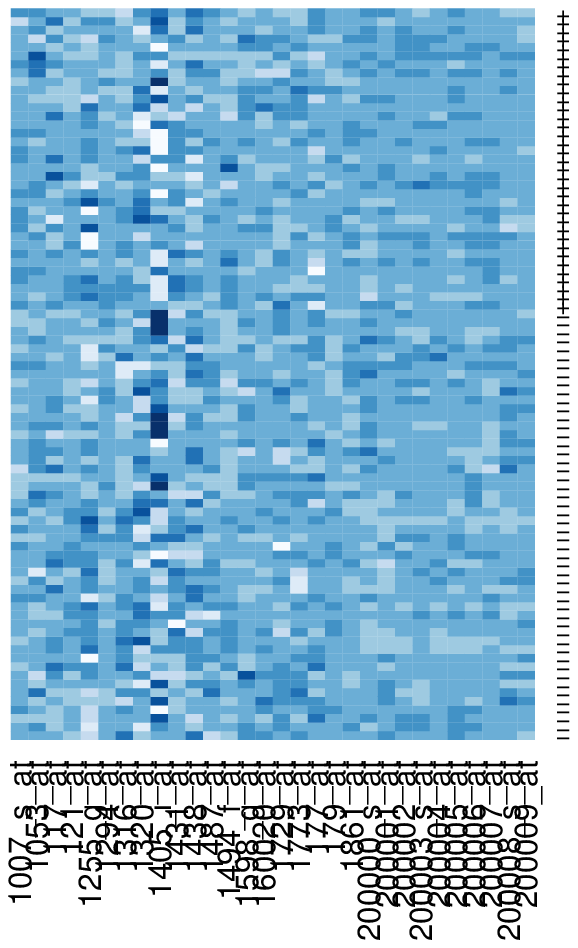}\\
       (a)
    \end{minipage}  
    \begin{minipage}{0.5\linewidth}
        \centering        
        \includegraphics[width=3in]{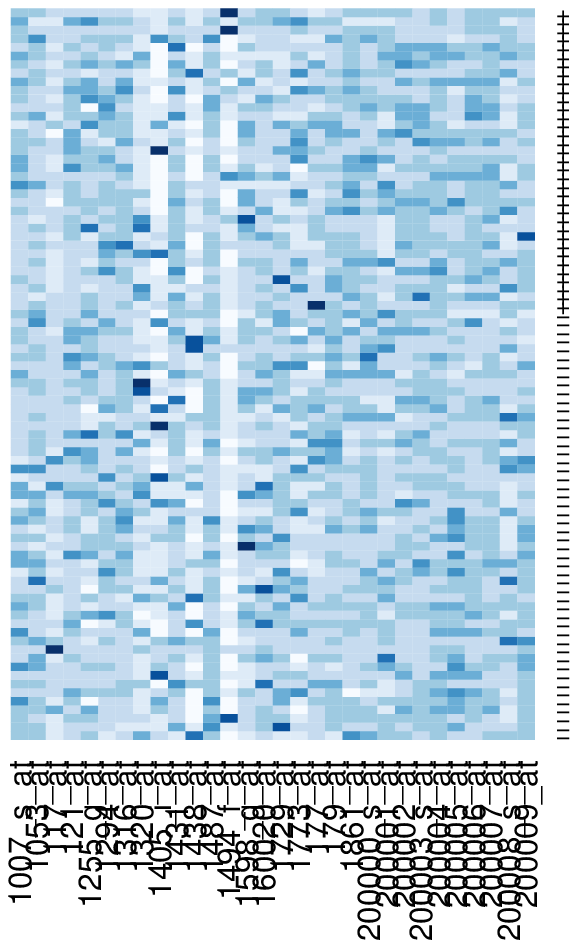}\\
        (b)
    \end{minipage}

    \begin{minipage}{0.5\linewidth}
        \centering
        \includegraphics[width=3in]{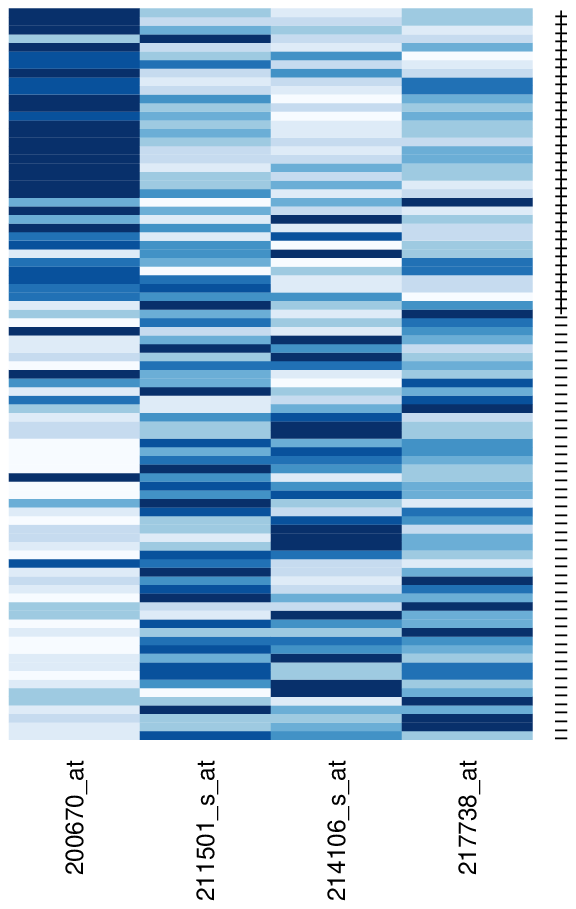}\\
        (c)
    \end{minipage}
     \begin{minipage}{0.5\linewidth}
        \centering
        \includegraphics[width=3in]{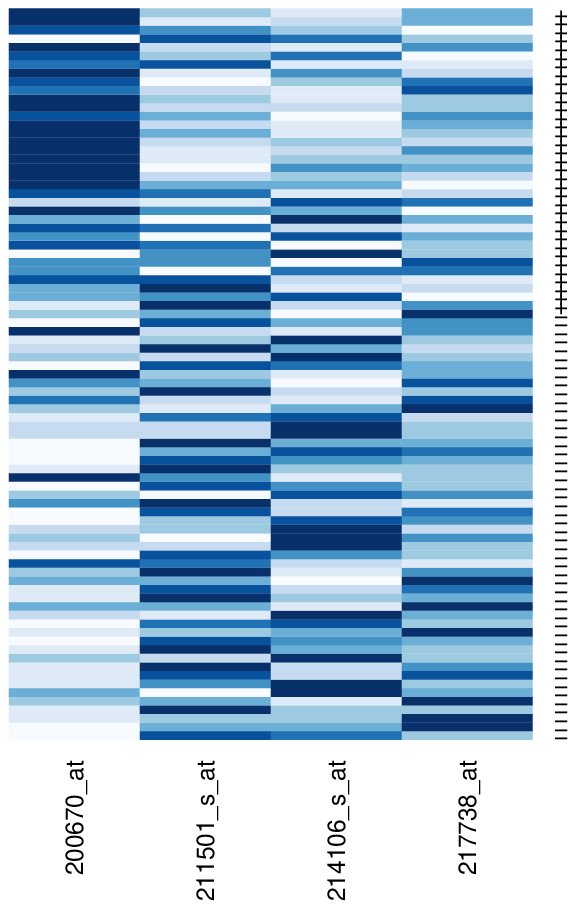}\\
        (d)
    \end{minipage}


\caption{Images (a) and (b) are heatmaps of the microarray gene expression profile from the two experiments in Wang et al. (2005) and in Iwamoto et al. (2011), respectively. Only the first 30 genes are depicted in the heatmaps; Images (c) and (d) are the heatmaps of the gene expression levels of the selected fours genes from  both experiments; "+" denotes estrogen-receptor positive samples and "-" denotes estrogen-receptor negative samples.}

\end{figure}

Second, we applied our method to Example \ref{Example2} discussed in the introduction. 
The data set contains financial market indices. We are interested in a panel of three indices including the S\&P 500 Index, the Dow Jones Index and the VIX index. The VIX is a measurement of implied volatility of the S\&P 500 Index and is highly negatively correlated with the S\&P 500 Index. The S\&P Index and the Dow Jones Index are positively correlated. The 46 covariates are the major international equity indices, the North American bond indices, and the major commodities indices. For example,  the Nikkei 225 index is a benchmark of the Japanese equity market, the CBOE 10-Year Treasury Note is a US bond market benchmark, and the Philadelphia Gold and Silver Index is an index of thirty precious metal mining companies that are traded on the Philadelphia Stock Exchange. The goal of the analysis is to select a subset of  covariates to model the panel of the S\&P 500 Index, the Dow Jones Index and the VIX index. The training data set consists of the three-year market performance of the S\&P 500 Index, Dow Jones Index and VIX index and the 46 covariates. The data contains the index values from every three days between September, 2013 to June, 2016.  For each index, the value used in the analysis is $\log$ (today's value/yesterday's value)*100. Because of the three-day spacing, there are a total of 232 records. The transformed values of the S\&P 500 Index, the Dow Jones Index and the VIX index are not autocorrelated at a 5\% significance level. For each response, we constructed a linear regression model based on the same set of covariates. 

We applied both the group \text{lasso} penalty and the group smoothly clipped absolute deviation penalty.  We used the pseudolikelihood information criterion to determine the optimal penalty term. Figure 2(a) shows the pseudolikelihood information criterion curves of the solution paths selected by both the group \text{lasso} and the group smoothly clipped absolute deviation penalty. The subset selected by group \text{lasso} contains 37 covariates and the subset obtained by the group smoothly clipped absolute deviation penalty contains 34 covariates. The two methods identified 31 covariates in common. In order to validate the submodels, we used the model built from the training data set to perform prediction on a different validation data set of 232 records. The prediction sum of squared errors for the submodel selected by the group smoothly clipped absolute deviation penalty, the submodel selected by the group \text{lasso} penalty, the full model and the total sum of squared variation in the response are 4930, 5746.81, 6375.95 and 12539.58 for the VIX index; 2.10, 2.57, 1.62 and 165.24 for the S\&P 500 index; and 15.27, 14.81, 17.47 and 160.83 for the Dow Jones Index, respectively. It is evident that both selected submodels have small prediction errors compared to the total variation in the responses across all the three responses in the validation data set. The submodel selected by the group smoothly clipped absolute deviation penalty has smaller prediction errors than the one selected by the group \text{lasso} in two out of the three responses. Figures 2(b), 2(c) and 2(d) depict the observed responses of the three indexes and the prediction curves provided by the submodel selected by the group smoothly clipped absolute deviation penalty. 

\begin{figure}
\label{fig2}
 \hspace{-4mm} 
    \begin{minipage}{0.5\linewidth}
            
        \includegraphics[width=3.0in]{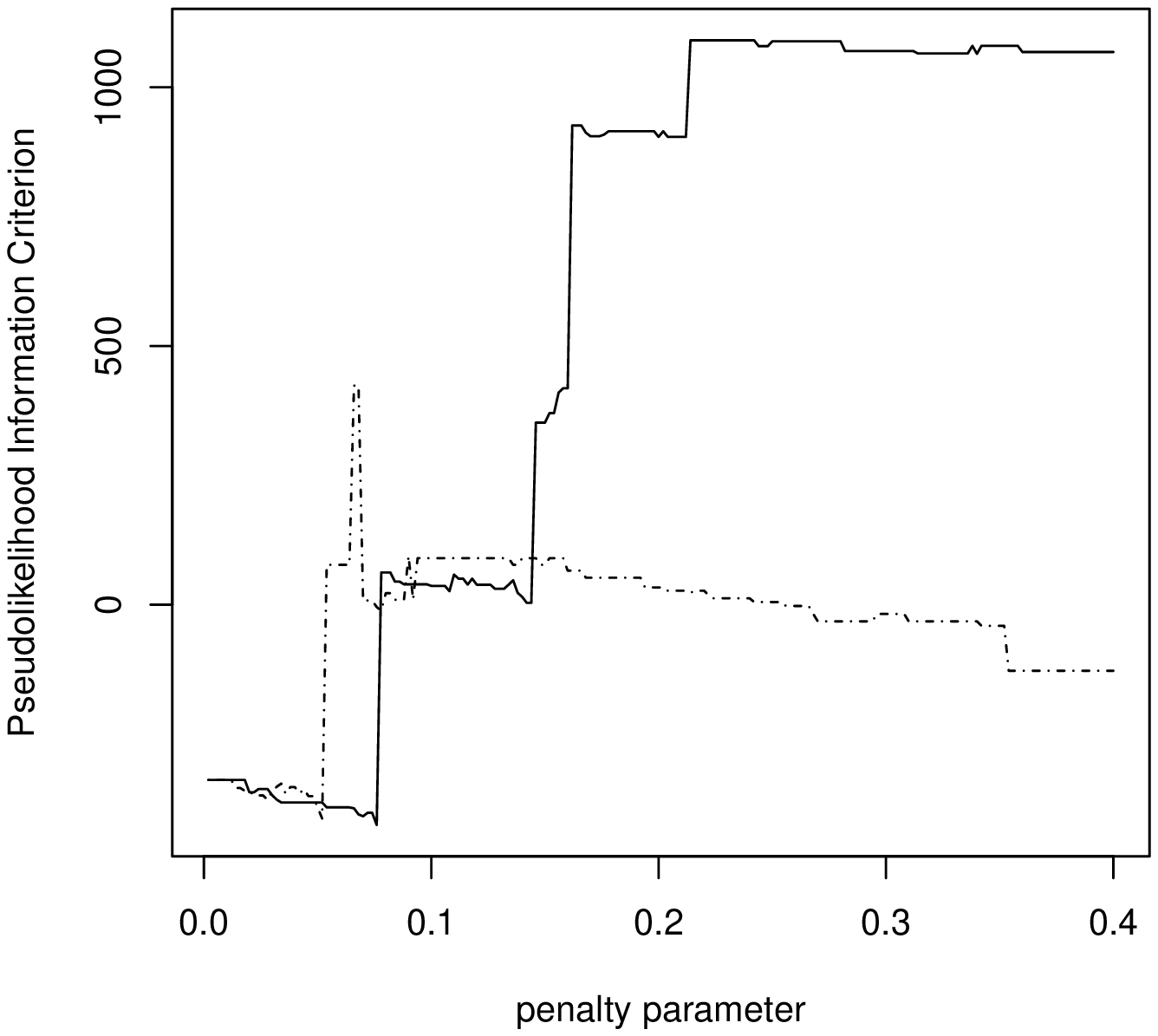}\\
       
    \end{minipage}  
    \begin{minipage}{0.5\linewidth}
        \centering        
        \includegraphics[width=3in]{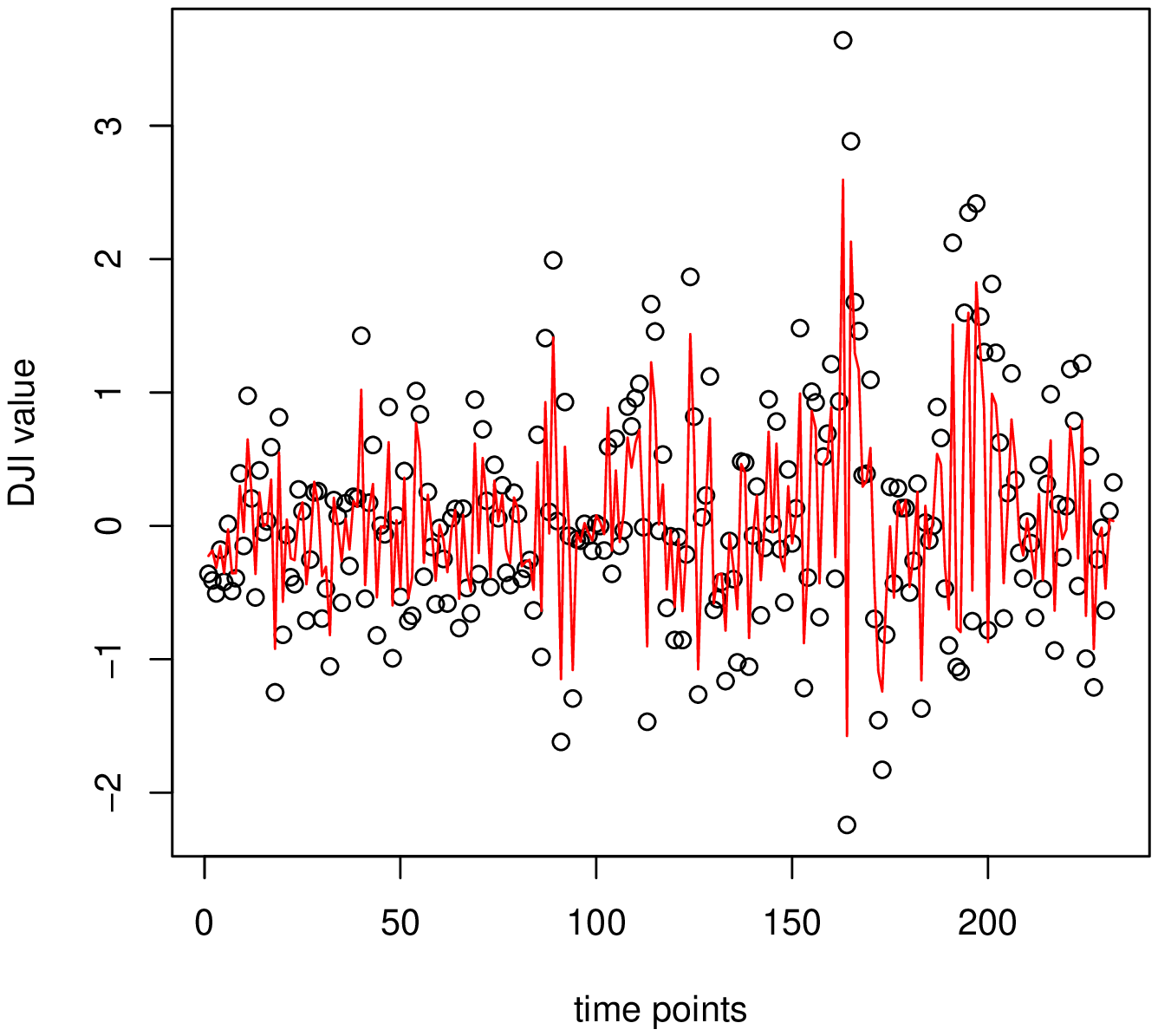}\\
        (a)
    \end{minipage}
    \begin{minipage}{0.5\linewidth}
        \centering
        \includegraphics[width=3in]{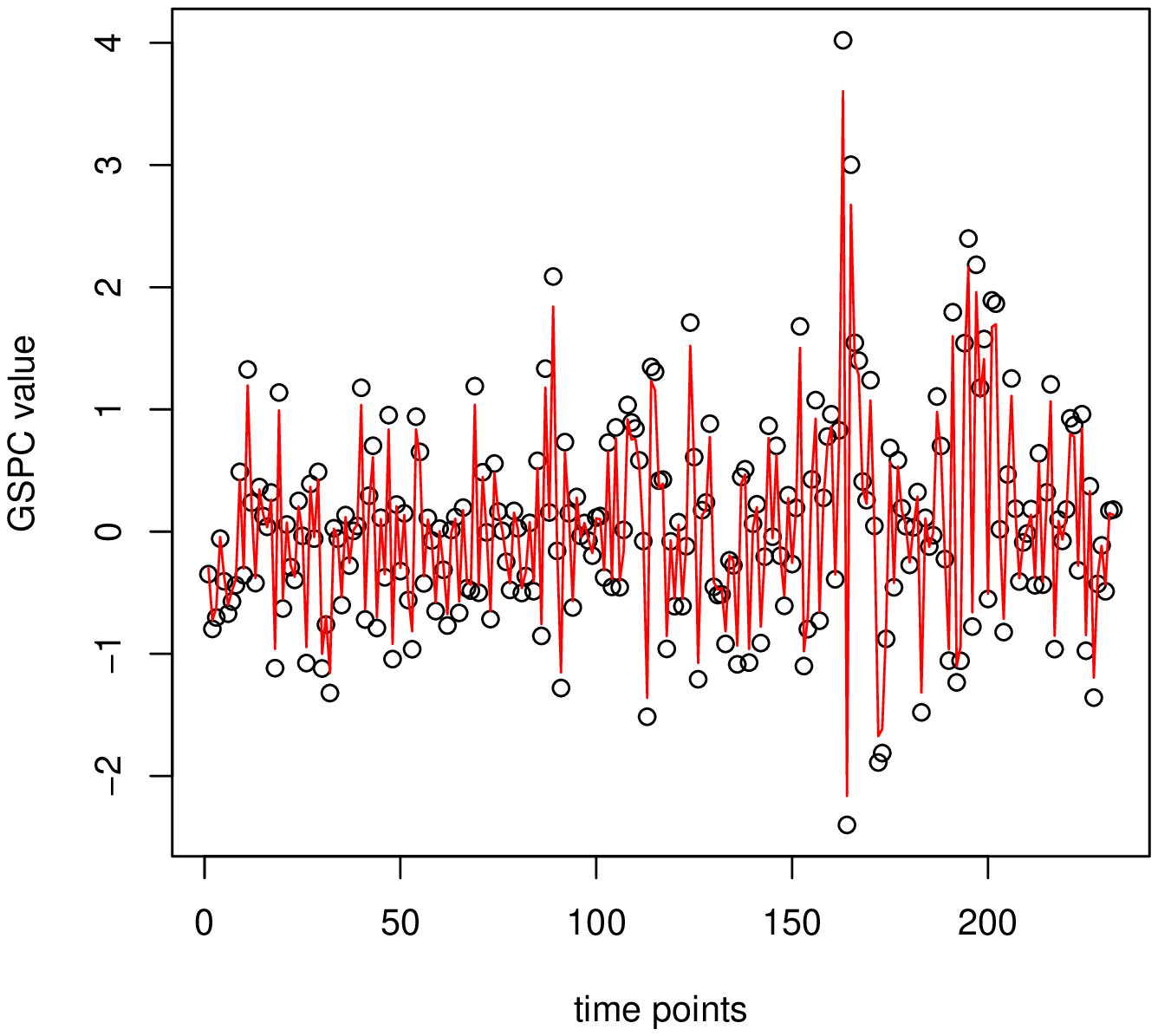}\\
        (b)
    \end{minipage}
     \begin{minipage}{0.5\linewidth}
        \centering
        \includegraphics[width=3in]{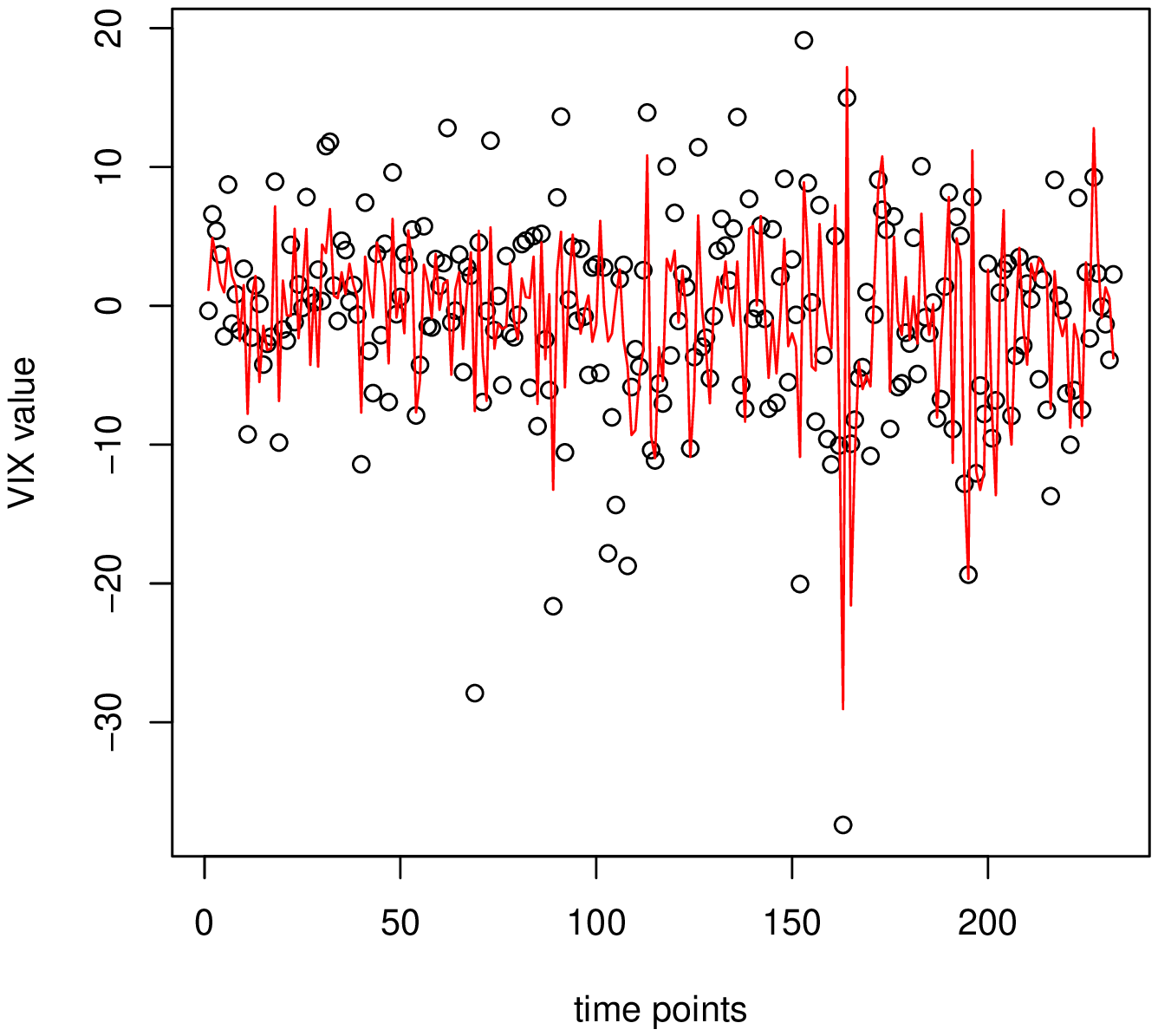}\\
        (c)
    \end{minipage}


\caption{Pseudolikelihood information criterion curve with the solid line indicating the solution path selected by the group smoothly clipped absolute deviation penalty, with the dashed line indicating the solution path selected by the group \text{lasso} (a); prediction of the Dow Jones Index (DJI) on the validation data (b); prediction of the S\&P 500 (GSPC) index on the validation data (c); prediction of the VIX index on the validation data (d).  In (b), (c) and (d), circles represent the observed values, solid lines represent the prediction by the subset model selected by the group smoothly clipped absolute deviation penalty. }

\end{figure}


\section*{Acknowledgments}
Gao's research was supported by a discovery grant from the Natural Sciences and Engineering Research Council of Canada. Carroll's research was supported by a grant from the National Cancer Institute. Carroll is also Distinguished Professor, School of Mathematical and Physical Sciences, University of Technology Sydney, Australia.

\appendix


\appendixone
\section*{Appendix 1}
\subsection*{Regularity conditions for likelihood inference}


\noindent {\it Condition 1}   Assume that the pseudo-loglikelihood admits  third derivatives for almost all $Y$ and for all $\theta\in \B,$ where the open set $\B\subset \Theta$,  contains the true $\theta^*.$ Furthermore,
$\vert\partial^3 \ell_I(\theta;Y_{(i)})/\partial \theta_{j} \partial \theta_{l} \partial \theta_{m}\vert<W_{jlm}(Y_{(i)}),
$ for $\theta \in \B,$ and for all $j, l, m \in \{vw, v=1,\ldots,K, w=1,\ldots,p_n\},$ where $
E_{\theta^*}\{W_{jlm}(Y_{(i)})\}^{2\kappa}\leq M_4
$ for an interger $\kappa\geq 1.$
\vskip 3mm

\noindent {\it Condition 2} The parameter space $\theta \in \Theta$ is a closed set. Each density $f_k(Y_k;\theta_k)$ is a measurable function of $Y_k$ for any $\theta_k,$ and is distinct for different values of $\theta_k.$ Let $\theta^*$ denote the true value of $\theta.$ We assume that
$E_{\theta^*}\{\partial \log f_k(Y_{ki};\theta_k)/\partial \theta_{kj}   \}=0,$ and 
\begin{eqnarray*}
E_{\theta^*}\left[\frac{\partial^2 \log \{f_k(Y_{ki};\theta)\}}{\partial \theta_{kj} \partial \theta_{kl}}\right]   =-E_{\theta^*}\left[\frac{\partial \log \{f_k(Y_{ki};\theta_k)\}}{\partial \theta_{kj} } \frac{\partial \log \{f_k(Y_{ki};\theta_k)\}}{\partial \theta_{kl} }  \right]
\end{eqnarray*} for
$j,l =1,\dots,p_n$ and $k=1,\dots,K$. 
\vskip 1mm
\noindent {\it Condition 3}   Let the submatrices of $H(\theta^*)$ and $V(\theta^*)$ with respect to the parameters in $\theta_a$ be denoted as $H^{(1)}(\theta^*)$ and $V^{(1)}(\theta^*).$ Assume that $
0<\lambda_{\min}\{H^{(1)}(\theta)\}<\lambda_{\max} \{H^{(1)}(\theta)\} <\infty,
$ and $
0<\lambda_{\min}\{V^{(1)}(\theta)\}<\lambda_{\max}\{V^{(1)}(\theta)\}<\infty,
$ where $\lambda_{\min}$ and $\lambda_{\max}$ denote the smallest and largest eigenvalues.

\section*{Appendix 2}
\subsection*{Proofs of the theorems}
The proofs will refer to Lemmas A1--A6 that are listed in Appendix 3.
\begin{proof}[of Theorem~\ref{thm1}]
Taking the first derivative of the objective function $Q(\theta)$ with respect to the $j$th grouped parameters $\theta^{(j)}$, we show that $\widehat{\theta}$ satisfies the Karush--Kuhn--Tucker conditions. By the definition of an oracle estimate, for $1\leq j \leq q_n,$ $\partial \ell_I(\theta)/\partial  \theta^{(j)}|_{\widehat{\theta}}=0.$ It can be shown that $\hbox{pr}(\min_{1\leq j \leq q_n}||\widehat{\theta}^{(j)}||\geq a\lambda_n) \rightarrow 1.$ This holds true because
$\min_{1\leq j \leq q_n}||\widehat{\theta}^{(j)}||\geq \min_{1\leq j \leq q_n}||\theta^{*(j)}||-\max_{1\leq j \leq q_n}||\theta^{*(j)}-\widehat{\theta}^{(j)}||,$ $\min_{1\leq j \leq q_n}||\theta^{*(j)}|| > M_5 n^{-(1-c_2)/2},$ $\max_{1\leq j \leq q_n}||\theta^{*(j)}-\widehat{\theta}^{(j)}||=O_p(n^{-(1-c_1)/2})$, and $\lambda_n=o(n^{-(1-c_2+c_1)/2}).$ Thus $\widehat{\theta}^{(j)}$ belongs to the third case in formula (\ref{kkt1}) and $\partial Q(\theta)/\partial \theta^{(j)}|_{\widehat{\theta}}=0.$ 

Let $S_j(\theta)=\partial \ell_I(\theta)/\partial \theta^{(j)}.$ For the remaining parameters, we prove that $\hbox{pr}(\max_{q_n <j \leq p_n}|| S_j(\widehat{\theta})||\leq n\lambda_n)\rightarrow 1.$  For each $k,$ $\widehat{\theta}_k$ is an oracle estimate for $\theta_k.$ Therefore, by Formula (A.7) in Kwon \& Kim (2012), it can be proved that
$
\hbox{pr}\{\max_{q_n< j \leq p_n}| \partial \ell_I(\widehat{\theta})/\theta_{kj}|>n\lambda_n/K^{1/2}\}\rightarrow 0.
$
Hence $\hbox{pr}\{\max_{q_n <j \leq p_n}\vert\vert S_j(\widehat{\theta})\vert\vert> n\lambda_n\}\leq \sum_{k=1}^K \hbox{pr}\{\max_{q_n< j \leq p_n}| \partial \ell_I(\widehat{\theta})/\theta_{kj}|> n\lambda_n/K^{1/2}\} \rightarrow 0.$ Thus $\widehat{\theta}^{(j)}$ belongs to the first case in formula (\ref{kkt1}) and $\partial Q(\theta)/\partial \theta^{(j)}|_{\widehat{\theta}}=0.$ 
\end{proof}

\begin{proof}[of Theorem~\ref{thm2}]
Let $\triangledown \ell_I(\theta)=\partial \ell_I(\theta)/\partial \theta$ denote the score vector of the pseudolikelihood.  Let $\triangledown_1$ denote partial differentiation with respect to $\theta_a.$ Let $\triangledown^2 \ell_I(\theta)$ denote the matrix of second derivatives $\partial^2 \ell_I(\theta)/\partial \theta \partial \theta\trans.$
We expand $\triangledown_1 \ell_I(\widehat{\theta})$ around $\theta^*,$ knowing that $\triangledown_1 \ell_I(\widehat{\theta})=0$, as
$\triangledown_1 \ell_I(\widehat{\theta})= \triangledown_1 \ell_I(\theta^*) + \triangledown_1^2 \ell_I(\theta^*)(\widehat{\theta}_a-\theta_a^*) +
R,$ where $R$ is a $q_n^*\times 1$ vector of remainder terms with $R_i=(1/2)\sum_{j,l}\partial^3 \ell_I(\theta)/\partial \theta_i \partial \theta_j \partial \theta_l|_{\widetilde{\theta}} (\widehat{\theta}_j-\theta^*_j)(\widehat{\theta}_l-\theta^*_l),$ and $i,\,j,\,l\,\in \{st,s=1,\ldots,K, t=1,\ldots,q_n\},$ and $\widetilde{\theta}$ between $\theta^*$ and $\widehat{\theta}.$
This leads to
$
n^{-1} \{\triangledown_1^2 \ell_I(\theta^*)\}(\widehat{\theta}_a-\theta_a^*)=
-n^{-1} \{\triangledown_1 \ell_I(\theta^*)+R\}.
$
By Assumption 1, we have that $|\partial^3 \ell_I(\theta^*)/(n \partial \theta_i \partial \theta_j \partial \theta_k)|\leq n^{-1}\sum_{l=1}^n W_{ijk}(Y_{(l)}).$ Thus
\begin{align*}
\begin{split}
|R_i/n|&\leq n^{-1} \sum_l \sum_j \sum_k W_{ijk}(Y_{(l)})(\widehat{\theta}_j-\theta_j^*)(\widehat{\theta}_k-\theta_k^*)\\
&=n^{-1} \sum_l \sum_j \sum_k \bigl[W_{ijk}(Y_{(l)})-E\{W_{ijk}(Y_{(l)})\}\bigr](\widehat{\theta}_j-\theta_j^*)(\widehat{\theta}_k-\theta_k^*)\\
&\hskip 10mm + n^{-1} \sum_l \sum_j \sum_k E\{W_{ijk}(Y_{(l)})\}(\widehat{\theta}_j-\theta_j^*)(\widehat{\theta}_k-\theta_k^*)=I_1+I_2,
\end{split}
\end{align*}
where $I_2\leq M q_n ||\widetilde{\theta}_1-\theta_1^*||^2=O_p(q_n^2/n)$ by the Cauchy--Schwarz inequality for some constant $M,$ say. Let $W^*_{ijkl}$ denote the centered random variable $W_{ijk}(Y_{(l)})-E\{W_{ijk}(Y_{(l)})\}.$ By the Rosenthal inequality, $E\{(\sum_l W^*_{ijkl})^2\}=O(n).$ Using the Markov inequality, 
$(\sum_l W^*_{ijkl})^2$ $=O_p(n).$ For $I_1$, by the Cauchy--Schwarz inequality,
\begin{align*}
\begin{split}
I_1&\leq n^{-1}  \{\sum_j(\widehat{\theta}_j-\theta_j^*)^2\}^{1/2} \{\sum_k(\widehat{\theta}_k-\theta_k^*)^2\}^{1/2} \{\sum_j \sum_k (\sum_l W^*_{ijkl})^2\}^{1/2}       \\
&=O_p(q_n/n^2) \{\sum_j \sum_k (\sum_l W^*_{ijkl})^2\}^{1/2}=O_p(q_n^2 n^{3/2}).
\end{split}
\end{align*}
By combining these results for $I_1$ and $I_2$, we have that $|R_i|=O_p(q_n^2).$ Let $A_{nr}$ denote the $r$th row of $A_n.$ It then follows that
\begin{align*}
\begin{split}
&|n^{-1/2}A_{nr}  \{V^{(1)}(\theta^*)\}^{-1/2}  R|
\leq  n^{-1/2} || A_{nr}||\lambda_{max}[\{V^{(1)}(\theta^*)\}^{-1/2}]|| R||=O_p\{(q_n^5/n)^{1/2}\}=o_p(1).
\end{split}
\end{align*}
Thus the vector of $n^{-1/2}A_{n} \{V^{(1)}(\theta^*)\}^{-1/2} R$ converges to zero in probability. By Lemma 8 in Fan \& Peng (2004), $||\bigl\{H^{(1)}(\theta^*)+n^{-1} \triangledown_1^2(\theta^*)\bigr\}(\widehat{\theta}_a-\theta_a^*)||\leq o_p(q_n^{-1})O_p\{(q_n/n)^{1/2}\}$, so that
\begin{align*}
\begin{split}
& \hskip 5mm |n^{1/2}A_{nr} \{V^{(1)}(\theta^*)\}^{-1/2} \bigl \{H^{(1)}(\theta^*)+n^{-1} \triangledown_1^2(\theta^*)\bigr\}(\widehat{\theta}_a-\theta_a^*)|\\
& \leq  n^{1/2} || A_{nr}||\lambda_{max}\{V^{(1)}(\theta^*)\}^{-1/2}||\bigl\{H^{(1)}(\theta^*)+n^{-1} \triangledown_1^2(\theta^*)\bigr\}(\widehat{\theta}_a-\theta_a^*)|| =o_p(1).
\end{split}
\end{align*}
It follows that the vector $n^{1/2}A_{n} \{V^{(1)}(\theta^*)\}^{-1/2} \bigl\{H^{(1)}(\theta^*)+n^{-1} \triangledown_1^2(\theta^*)\bigr\}(\widehat{\theta}_a-\theta_a^*)$ converges to zero in probability. This means that $
n^{1/2}A_{n} \{V^{(1)}(\theta^*)\}^{-1/2}H^{(1)}(\theta^*)(\widehat{\theta}_a-\theta_a^*)=n^{-1/2}A_{n} \{V^{(1)}(\theta^*)\}^{-1/2}\triangledown_1 \ell_I(\theta^*)+o_p(1).
$
Next let $Z_{l}=n^{-1/2}A_n\{V^{(1)}(\theta^*)\}^{-1/2}\triangledown_1 \ell_I(\theta^*,Y_{(l)}).$
By the argument in the proof of Theorem 2 in Fan \& Peng (2004),   $\sum_{l=1}^n E|| Z_{l}||^2 I\{Z_{l}\geq \epsilon\}=o(1),$ and $\lim_{n} \sum_{l=1}^n \Cov(Z_{l})=G.$ According to the Lindeberg--Feller central limit theorem, this means that
$n^{-1/2}A_n\{V^{(1)}(\theta^*)\}^{-1/2}\triangledown_1 \ell_I(\theta^*) \to N(0, G)$ in distribution, completing the proof.
\end{proof}

\begin{proof}[of Theorem~\ref{thm3}]
\noindent According to
Lemma A\ref{uniform2},
$
\max_{s\in S_{-}} 2\{\ell_I(\widehat{\theta}_{s}) -
\ell_I(\theta_{s}^*)\}= O_p\{s_n\log (p_n)\}.
$
For the true model $T$,
$
2\{\ell_I(\widehat{\theta}_{T}) -
\ell_I(\theta_{T}^*) \} =O_p(1).
$
Define $\lambda_{T|s}(Y)=\ell_I(\theta_T^*;Y)-\ell_I(\theta_s^*;Y).$ Based on Lemma A\ref{sna}, we have
$
\max_{s\in S_{-}} \lambda_{T|s}(Y) -
E_{\theta_{T}^*}\{\lambda_{T|s}(Y)\}
=O_p[ \{n s_n\log (p_n)\}^{1/2}].
$
Therefore, for an under-fitting model,
\begin{eqnarray*}
-2\{\ell_I(\widehat{\theta}_{s})-\ell_I(\widehat{\theta}_{T})\}
&\geq&-2[\max_{s\in S_{-}} \{\ell_I(\widehat{\theta}_{s}) - \ell_I(\theta_{s}^*)\}]+2\{\ell_I(\widehat{\theta}_{T})-\ell_I(\theta_{T}^*)\}\\
&& \hskip 10mm +2 [\lambda_{T|s}(Y) -
E_{\theta_{T}^*}\{\lambda_{T|s}(Y)\}]
  +2 E_{\theta_{T}^*} \{\lambda_{T|s}(Y)\} \\
&=&O_p\{s_n\log(p_n)\}+O_p[\{n s_n \log (p_n)\}^{1/2}]+2E_{\theta_{T}^*}\{\lambda_{T|s}(Y)\}.
\end{eqnarray*}
Furthermore,
$
\min_{s\in S_{-}}\text{pseu-BIC(s)}-\mbox{pseu-BIC(T)}
\geq  \min_{s\in
S_{-}}-2\{\ell_I(\widehat{\theta}_{s})-\ell_I(\widehat{\theta}_{T})\}+
\gamma_n(d_s^*-d_T^*).
$
Because $|\gamma_n (d_s^*- d_T^*)|=O\{s_n\log (p_n)\},$ and $s_n^4\log (p_n)=o(n)$ by assumption,
$\liminf_{n\rightarrow \infty}\min_{s\in \mathcal{S}_{-}} E_{\theta_{T}^*}\{\lambda_{T|s}(Y)\}/\{n s_n\log (p_n)\}^{1/2}=\infty,$
and thus $\hbox{pr}_{\theta_{T}^*}\{\mbox{pseu-BIC}(T) <
\min_{s\in S_{-}}\mbox{pseu-BIC}(s)\}\rightarrow 1.$

For an over-fitting
marginal $s$,
\begin{eqnarray*}
\mbox{pseu-BIC}(s) - \mbox{pseu-BIC}(T)
 &=& -2 \{\ell_I(\widehat{\theta}_s) -
\ell_I(\widehat{\theta}_{T})\}
+ (d_s^* - d_T^*) \gamma_n   \\
&\geq& -\max_{s\in \mathcal{S}+}Q_{s/T}+(d_s^* - d_T^*) \gamma_n+o_p(1).
\end{eqnarray*}
By Lemma A\ref{uniform3}, $\hbox{pr}_{\theta_{T}^*}\{\max_{s\in \mathcal{S}+}Q_{s/T}<(d_s^* - d_T^*) \gamma_n \}\rightarrow 1$, completing the proof.
\end{proof}

\section*{Appendix 3}
\subsection*{Proofs of the technical lemmas}
\begin{proof}[of Lemma~\ref{expmomentmult}]
By Taylor expansion, for $|| t||\leq \delta,$ the cumulant generating
function for $Z_i$ is
$
g_i(t)= t\trans t/2+1/6\sum_{j,k,l=1}^m \partial^3 g_i(t^*)/(\partial t_j \partial t_k \partial t_l) \,t_j t_k t_l, 
$
for some $0 \leq || t^*|| \leq || t|| \leq \delta.$
  Let $\partial^3 \overline{g}(t)/(\partial t_j \partial t_k \partial t_l)=n^{-1}\sum_{i=1}^n \partial^3 g_i(t)/(\partial t_j \partial t_k \partial t_l).$ Because each partial third order derivative is uniformly bounded, so too is the average partial third order derivative. For any
$|| t||/n^{1/2}\leq \delta,$ the moment generating function of
$\eta=n^{-1/2}\sum_{i=1}^n Z_i$ is
$$
\phi_{\eta}(t)=\exp\{t\trans t/2+ 1/6\sum_{i,j,k=1}^m 
n^{-1/2}\partial^3 \overline{g}(t^*/n^{1/2})/(\partial t_i \partial t_j \partial t_k) t_i t_j t_k\}=\exp[(t\trans t/2)\{1+o(1)\}].
$$ This is due to the fact that $|| t^*||<|| t||<\{s_n^2\log (p_n)\}^{1/2}$, and $|\partial^3 \overline{g}(t^*/n^{1/2})/(\partial t_i \partial t_j \partial t_k)|<C$ as  $|| t||/n^{1/2}\leq n^{-1/2}\{s_n^2\log (p_n)\}^{1/2}\rightarrow 0.$ Therefore,
$\log[ E\{\exp(t\eta)\}]\leq a^2 t\trans t/2$ for $|| t||< \{s_n^2\log (p_n)\}^{1/2},$ for some $a^2>1,$ and $n$  sufficiently large.

\end{proof}

\begin{lemma}
\label{uniest} Under regularity Conditions 1--3 and Assumptions 1--4, there exists a solution $\widehat{\theta}_s$ to the score equation $U_n(\theta_s;Y)=\partial \ell_I(\theta_s,Y)/\partial \theta_s=0$ such that it falls within an $\{s_n^2 \log (p_n)/n\}^{1/2}$
neighborhood of $\theta_{s}^*$ for all $s\in \mathcal{S}$ with
probability tending to $1,$ as $n\rightarrow \infty.$
\end{lemma}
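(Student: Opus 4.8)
The plan is to prove a uniform, over the model class $\mathcal{S}$, version of the ball argument used for a single model by Fan \& Peng (2004, Theorem~1). Fix a large constant $C$, put $r_n=C\{s_n^2\log(p_n)/n\}^{1/2}$, fix $s\in\mathcal{S}$, and write $u=r_n\mathbf{v}$ with $\mathbf{v}$ a generic unit vector of the $d_s$-dimensional parameter space $\Theta_s$. Since $\ell_I(\cdot;Y)$ admits third derivatives on $\Theta_s$ by Condition~1, a Taylor expansion about the pseudo-true value $\theta_s^*$ gives
\[
\ell_I(\theta_s^*+u)-\ell_I(\theta_s^*)=u\trans U_n(\theta_s^*)+\tfrac12\,u\trans\{\nabla^2\ell_I(\theta_s^*)\}\,u+R_s(u),
\]
where $R_s(u)$ collects the third-order terms, with $\partial^3\ell_I$ evaluated on the segment joining $\theta_s^*$ to $\theta_s^*+u$. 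I would show that, with probability tending to one, the quadratic term is negative of exact order $nr_n^2\asymp s_n^2\log(p_n)$ and dominates both the linear term and $R_s(u)$, simultaneously over every unit $\mathbf{v}$ and every $s\in\mathcal{S}$; then continuity of $\ell_I(\cdot;Y)$ forces a local maximiser, hence a zero of the score $U_n(\theta_s;Y)$, inside the ball $\{\|\theta_s-\theta_s^*\|<r_n\}$, for every $s$ at once.

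For the linear term, because $\theta_s^*=\text{argmax}_{\theta\in\Theta_s}\Mean_{\theta_T^*}\{\ell_I(\theta)\}$ is an interior maximiser, $\Mean_{\theta_T^*}\{U_n(\theta_s^*)\}=0$ and $\Cov_{\theta_T^*}\{U_n(\theta_s^*)\}=n\,V_s(\theta_s^*)$, so the standardised vector $\eta_s=n^{-1/2}\{V_s(\theta_s^*)\}^{-1/2}U_n(\theta_s^*)$ is an average of $n$ independent mean-zero, identity-covariance summands. By Assumption~\ref{Assumption3} these summands obey the cumulant boundedness condition of Definition~1, with constants uniform in $s$ (the eigenvalue bounds on $V_s$ preserve the condition under the standardising linear map), so Lemma~\ref{expmomentmult} gives $\log\Mean\{\exp(t\trans\eta_s)\}\le a^2\|t\|^2/2$ for $\|t\|<\{s_n^2\log(p_n)\}^{1/2}$ and some $a^2>1$. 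The Spokoiny \& Zhilova (2013) deviation bound for the quadratic form $\|\eta_s\|^2$ then applies, and taking the deviation level of order $s_n\log(p_n)$ --- which lies in the Gaussian regime, since $s_n\log(p_n)$ is far below $\{s_n^2\log(p_n)\}$ and dominates the dimension $d_s=O(s_n)$ --- yields $\pr_{\theta_T^*}\{\|\eta_s\|^2>C's_n\log(p_n)\}\le p_n^{-C''s_n}$. A Bonferroni bound over the at most $p_n^{s_n}$ models of $\mathcal{S}$ makes $\max_{s\in\mathcal{S}}\|\eta_s\|^2=O_p\{s_n\log(p_n)\}$, hence $\max_{s\in\mathcal{S}}\|U_n(\theta_s^*)\|=O_p[\{n s_n\log(p_n)\}^{1/2}]$ and $|u\trans U_n(\theta_s^*)|\le r_n\max_{s}\|U_n(\theta_s^*)\|=O_p\{s_n^{3/2}\log(p_n)\}=o_p(nr_n^2)$; the radius is inflated to $r_n^2\asymp s_n^2\log(p_n)/n$ precisely so that this ratio vanishes.

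For the quadratic and remainder terms, $n^{-1}\nabla^2\ell_I(\theta_s^*)=n^{-1}\sum_i\ell_I^{(2)}(\theta_s^*;Y_{(i)})$ has mean $-H_s(\theta_s^*)$, and the same machinery (cumulant boundedness of $\ell_I^{(2)}$ from Assumption~\ref{Assumption3}, the exponential moment bound of Lemma~\ref{expmomentmult}, an entrywise Bernstein/Spokoiny--Zhilova bound, a union bound over $\mathcal{S}$) yields $\sup_{s\in\mathcal{S}}\|n^{-1}\nabla^2\ell_I(\theta_s^*)+H_s(\theta_s^*)\|=o_p(1)$; I would carry this out uniformly over $\{\|\theta_s-\theta_s^*\|\le r_n\}$, bounding the increment of $\nabla^2\ell_I$ over that ball by $r_n$ times an envelope via Condition~1, which also disposes of $R_s(u)$. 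By the eigenvalue lower bound on $H_s$ in Assumption~\ref{Assumption3}, $n^{-1}\nabla^2\ell_I(\theta_s)$ is then negative definite with largest eigenvalue below $-c_0/2$ for some $c_0>0$, uniformly over the ball and over $\mathcal{S}$, so $\tfrac12 u\trans\{\nabla^2\ell_I(\theta_s^*)\}u\le -\tfrac{c_0}{4}nr_n^2$ on $\|u\|=r_n$. Treated directly, $R_s(u)$ is bounded via Condition~1, which gives $|\partial^3\ell_I(\widetilde\theta;Y_{(i)})/\partial\theta_j\partial\theta_l\partial\theta_m|\le W_{jlm}(Y_{(i)})$ with $\Mean\{W_{jlm}^{2\kappa}\}\le M_4$: for $\kappa$ large enough a Rosenthal/Markov argument plus a union bound over the envelopes gives $\max_{j,l,m}n^{-1}\sum_i W_{jlm}(Y_{(i)})=O_p(1)$, and, since $\ell_I$ is a weighted sum over experiments and hence its third derivative vanishes unless $j,l,m$ index the same experiment, $|R_s(u)|=O_p(n s_n^{3/2}r_n^3)$, which is negligible against $nr_n^2$ under the growth restriction $s_n^4\log(p_n)=o(n)$ of Assumption~\ref{Assumption4}. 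Combining, on $\|u\|=r_n$ one has $\ell_I(\theta_s^*+u)-\ell_I(\theta_s^*)\le -\tfrac{c_0}{4}nr_n^2+o_p(nr_n^2)<0$ uniformly in $\mathbf{v}$ and $s$, as required.

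The principal obstacle I anticipate is exactly the uniformity over $\mathcal{S}$: there are up to $p_n^{s_n}$ competing models, so every tail estimate --- for the score, for the Hessian, and for the third-order envelope --- must be sharp enough to survive a Bonferroni bound of that cardinality. This forces the argument away from the limiting weighted-$\chi^2$ law of the pseudolikelihood statistics and onto the non-asymptotic quadratic-form bounds of Spokoiny \& Zhilova (2013), makes Lemma~\ref{expmomentmult} indispensable for verifying their exponential-moment hypothesis on a sufficiently large neighbourhood of the origin, and is the reason the rate restriction $s_n^4\log(p_n)=o(n)$ in Assumption~\ref{Assumption4} cannot be dispensed with: it is needed both to validate that hypothesis and to keep the third-order remainder negligible against the quadratic term.
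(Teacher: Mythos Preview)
Your proposal is essentially the paper's own argument: the same Fan--Peng ball construction with radius $r_n=C\{s_n^2\log(p_n)/n\}^{1/2}$, the same Taylor expansion about $\theta_s^*$, and the same uniform-over-$\mathcal{S}$ control by exponential tail bounds plus a Bonferroni sum over the $p_n^{s_n}$ models. The paper packages the tail estimates as its Lemma~A5/A6 (a scalar Chernoff bound applied componentwise to the score and Hessian), whereas you go through Spokoiny--Zhilova on the quadratic form $\|\eta_s\|^2$ for the score; this actually gives you a slightly tighter $\max_s\|U_n(\theta_s^*)\|=O_p[\{ns_n\log p_n\}^{1/2}]$ than the paper's $O_p[\{ns_n^2\log p_n\}^{1/2}]$, but the overall route is the same.

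One small slip worth flagging: your direct treatment of the cubic remainder gives $|R_s(u)|=O_p(ns_n^{3/2}r_n^3)$, so $R_s(u)/(nr_n^2)=O_p[\{s_n^5\log(p_n)/n\}^{1/2}]$, which requires $s_n^5\log p_n=o(n)$, a factor of $s_n$ stronger than Assumption~4 supplies. The paper avoids this by stopping at second order with the Hessian evaluated at an intermediate point $\widetilde\theta_s$ and then invoking directly the assumed uniform eigenvalue bound on $H_s(\theta_s)$ in the $\delta$-neighbourhood (Assumption~3); that way no separate cubic term appears, and only the entrywise deviation $\max_s|\ell^{(2)}_{jl}(\widetilde\theta_s)-E\ell^{(2)}_{jl}(\widetilde\theta_s)|=O_p[\{ns_n\log p_n\}^{1/2}]=o_p(n)$ needs to be controlled. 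Switching your expansion to that form fixes the arithmetic.
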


\begin{proof}[of Lemma~A\ref{uniest}]
For any unit vector $v,$ let
$\theta_s=\theta_{s}^*+C\{s_n^2 \log (p_n)/n\}^{1/2}v,$ for some constant $C.$ By Taylor expansion,
\begin{eqnarray*}
\ell_I(\theta_s)-\ell_I(\theta_{s}^*)
=C \{s_n^2 \log (p_n)/n\}^{1/2}
v\trans U_n(\theta_{s}^*)+(1/2)C^2 \{s_n^2 \log (p_n)/n\} v\trans \ell_I^{(2)}(\widetilde{\theta}_{s})v,
\end{eqnarray*}
where $\widetilde{\theta}_s$ is within the $\eta$-neighborhood
of $\theta_{s}^*,$ and $\ell_I^{(2)}=\partial^2 \ell_I(\theta_s)/\partial \theta_s^2.$ By the regularity condition that $E\{-\ell_I^{(2)}(\widetilde{\theta}_{s})\}$ has eigenvalues uniformly bounded away from zero and infinity, when $\theta_s$ is in the $\eta$-neighborhood of $\theta_s^*,$ we have
 $ v\trans E\{-\ell_I^{(2)}(\widetilde{\theta}_{s})\}v=O_p(n).$ Using
arguments similar to those in the proof of Lemma A~\ref{sna}, we have
$$
\max_{s \in \mathcal{S}}|\ell_{ij}^{(2)}(\widetilde{\theta}_{s})-E\{\ell_{ij}^{(2)}(\widetilde{\theta}_{s})\}|=O_p[n^{1/2}s_n^{1/2}\{\log
(p_n)\}^{1/2}]=o_p[E\{\ell_{ij}^{(2)}(\widetilde{\theta}_{s})\}],
$$ where $\ell_{ij}^{(2)}$ denotes the second derivative of $\ell_I$ respect to index $i$ and $j$ where $i,j \in \{(vw),v=1,\ldots,K, w=1,\ldots,d_s\}.$
From Lemma A\ref{sna}, we have
$\max_s || U_n(\theta_{s,0})||=\{n s_n^2 \log (p_n)\}^{1/2}.$ By the Cauchy--Schwarz inequality, we have $v\trans U_n(\theta_{s,0})\leq || v||*|| U_n(\theta_{s,0})||=O_p[\{n s_n^2\log (p_n)\}^{1/2}].$ Combining the results above, we have
$
\max_{s\in \mathcal{S}} \{\ell_I(\theta_s)-\ell_I(\theta^*_{s})\}
< 0
$
 in probability with constant $C$ chosen sufficiently large.
This means that
$
\pr[\max_{s\in
\mathcal{S}}\{\ell_I(\theta_s)-\ell_I(\theta_{s}^*)\}<0]\rightarrow
1,\,\,\text{as}\,n\rightarrow \infty.
$
Thus with
probability tending to $1,$  there exists solution to the
 score equation such that it falls within an $\{s_n^2 \log (p_n)/n\}^{1/2}$
neighborhood of $\theta_{s}^*$ for all $s\in \mathcal{S}.$
\end{proof}

\begin{lemma}
\label{uniformnew}
Under regularity Conditions 1--3 and Assumptions 1--4, $
2\{\ell_I(\widehat{\theta}_{s})-\ell_I(\theta_{s}^*)\}
=Q_s\{1+o_p(1)\},$ where $Q_s=n^{-1} U_n(\theta_{s}^*)\trans
H_{s}(\theta_{s}^*)^{-1}U_n(\theta_{s}^*),$ and $o_p(1)$ holds for all models
$s\in \mathcal{S}.$
\end{lemma}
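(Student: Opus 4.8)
The plan is to derive the quadratic approximation at each pseudo-true value $\theta_s^*$ by a second-order Taylor expansion of $\ell_I$ combined with the first-order expansion of the score equation $U_n(\widehat{\theta}_s)=0$, and then to upgrade every estimate so that it holds uniformly over the model space $\mathcal{S}$, which contains of order $p_n^{s_n}$ competing models and has diverging dimensions $d_s$.

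First I would invoke Lemma~A\ref{uniest} to place the root $\widehat{\theta}_s$ of $U_n(\theta_s;Y)=0$ inside an $\{s_n^2\log(p_n)/n\}^{1/2}$ neighborhood of $\theta_s^*$ simultaneously for all $s\in\mathcal{S}$, with probability tending to one. On that event I would Taylor-expand the pseudo-loglikelihood to third order,
\[
\ell_I(\widehat{\theta}_s)-\ell_I(\theta_s^*)=U_n(\theta_s^*)\trans(\widehat{\theta}_s-\theta_s^*)+\tfrac12(\widehat{\theta}_s-\theta_s^*)\trans\ell_I^{(2)}(\theta_s^*)(\widehat{\theta}_s-\theta_s^*)+R_s,
\]
and expand the score equation to second order, $0=U_n(\theta_s^*)+\ell_I^{(2)}(\theta_s^*)(\widehat{\theta}_s-\theta_s^*)+\widetilde{R}_s$, so that $\widehat{\theta}_s-\theta_s^*=\{-\ell_I^{(2)}(\theta_s^*)\}^{-1}\{U_n(\theta_s^*)+\widetilde{R}_s\}$, with $R_s$ and $\widetilde{R}_s$ involving only third derivatives of $\ell_I$. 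Substituting the second identity into the first and cancelling the two quadratic pieces leaves the leading term $\tfrac12 U_n(\theta_s^*)\trans\{-\ell_I^{(2)}(\theta_s^*)\}^{-1}U_n(\theta_s^*)$.

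Next I would replace the per-observation observed information $-n^{-1}\ell_I^{(2)}(\theta_s^*)$ by its limit $H_s(\theta_s^*)$. Writing $-n^{-1}\ell_I^{(2)}(\theta_s^*)=H_s(\theta_s^*)-\{-n^{-1}\ell_I^{(2)}(\theta_s^*)-H_s(\theta_s^*)\}$ and bounding the centered second derivatives by the same maximal-deviation argument as in Lemma~A\ref{sna} — which uses the cumulant boundedness of $\ell_I^{(2)}(\theta_s^*;Y_{(i)})$ in Assumption~\ref{Assumption3}, the exponential-moment bound of Lemma~\ref{expmomentmult}, and a union bound over the $p_n^{s_n}$ models — gives $\max_{s\in\mathcal{S}}\|n^{-1}\ell_I^{(2)}(\theta_s^*)+H_s(\theta_s^*)\|=O_p[\{s_n\log(p_n)/n\}^{1/2}]=o_p(1)$. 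Together with the uniform eigenvalue bounds on $H_s$ in Assumption~\ref{Assumption3}, this yields $\{-\ell_I^{(2)}(\theta_s^*)\}^{-1}=n^{-1}H_s(\theta_s^*)^{-1}\{1+o_p(1)\}$ in operator norm uniformly in $s$, so the leading term equals $\tfrac12 Q_s\{1+o_p(1)\}$ with $Q_s=n^{-1}U_n(\theta_s^*)\trans H_s(\theta_s^*)^{-1}U_n(\theta_s^*)$.

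Finally I would show that the two remainders are of strictly smaller order than the leading quadratic term, uniformly over $\mathcal{S}$. Lemma~A\ref{sna} applied to the score gives $\max_{s\in\mathcal{S}}\|U_n(\theta_s^*)\|^2=O_p\{n s_n^2\log(p_n)\}$ and hence $\max_{s\in\mathcal{S}}Q_s=O_p\{s_n^2\log(p_n)\}$; Condition~1 bounds the third derivatives of $\ell_I$ by $\sum_l W_{jlm}(Y_{(l)})$, and a maximal-deviation bound on the centered $W$'s combined with $\|\widehat{\theta}_s-\theta_s^*\|^2=O_p\{s_n^2\log(p_n)/n\}$ shows that $\max_{s}|R_s|$ and $\max_{s}\|\widetilde{R}_s\|$ are negligible relative to the leading term, which is where the assumption $s_n^4\log(p_n)=o(n)$ enters. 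The main obstacle is precisely this last step: controlling the third-order remainder uniformly over an exponentially large model space with diverging $d_s$, since the number of summands in $R_s$ grows like $d_s^3$ and must be reconciled with the $\{s_n^2\log(p_n)/n\}^{1/2}$ estimation radius and with the polynomial tail bounds coming from Assumption~\ref{kappa} and Lemma~\ref{expmomentmult}, so that $\max_s|R_s|=o_p\{s_n\log(p_n)\}$.
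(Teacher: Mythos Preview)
Your proposal follows essentially the same route as the paper: invoke Lemma~A\ref{uniest} for the uniform $\{s_n^2\log(p_n)/n\}^{1/2}$ neighborhood, Taylor-expand the score equation to obtain $(\widehat{\theta}_s-\theta_s^*)=n^{-1}H_s^{-1}(\theta_s^*)U_n(\theta_s^*)\{1+o_p(1)\}$ uniformly in $s$, then Taylor-expand $\ell_I$ itself and substitute. The paper likewise uses Lemma~A\ref{sna} entrywise for the centered first, second and third derivatives, the eigenvalue bounds on $H_s$ from Assumption~\ref{Assumption3}, and the rate $s_n^4\log(p_n)=o(n)$ to absorb the cubic term.

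One step deserves tightening. In your last paragraph you bound $\max_s Q_s$ from above by $O_p\{s_n^2\log(p_n)\}$ and aim for an absolute bound $\max_s|R_s|=o_p\{s_n\log(p_n)\}$; that pairing does not yield the multiplicative conclusion $R_s=o_p(Q_s)$, because for a given model $s$ the quadratic $Q_s$ may be much smaller than $s_n\log(p_n)$. The paper avoids this by comparing the remainder not to $Q_s$ but to the quadratic $n(\widehat{\theta}_s-\theta_s^*)\trans H_s(\widehat{\theta}_s-\theta_s^*)$, which is bounded below by $n\lambda_{\min}(H_s)\|\widehat{\theta}_s-\theta_s^*\|^2$. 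Concretely, it shows $(\ell^{(2)}_{rt}+nH_{rt})/(nH_{rt})=o_p(1)$ and $\{\sum_u(\widehat{\theta}_s-\theta_s^*)_{[u]}\ell^{(3)}_{rtu}(\widetilde{\theta}_s)\}/(nH_{rt})=o_p(1)$, so the error is a $\{1+o_p(1)\}$ factor on the second-order quadratic itself, and then the substitution $(\widehat{\theta}_s-\theta_s^*)=n^{-1}H_s^{-1}U_n\{1+o_p(1)\}$ transfers that factor to $Q_s$. Recasting your remainder control in this relative form, rather than as a free-standing absolute bound, closes the gap.
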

\begin{proof}[of Lemma~A\ref{uniformnew}]
Consider a competing model $s.$ Let $\ell^{(1)}_{r}$ denote $\partial \ell_I/\partial \theta_{r},$ $\ell^{(2)}_{rt}$ denote $\partial^2 \ell_I/\partial\theta_{r}\partial\theta_{t},$ $\ell^{(3)}_{rtu}$ denote $\partial^3 \ell_I/\partial\theta_{r}\partial\theta_{t}\partial\theta_{u},$ for $r,t, u \in \{(vw), v=1,\ldots, K, w=1,\ldots,d_s\}.$ Let $H_{rt}(\theta_s^*)$ denote the $(r,t)$th entry of the Hessian matrix.
A Taylor expansion of
$\ell^{(1)}_r(\widehat{\theta}_s)=0$ around
$\theta_{s}^*$ gives the system of equations
\begin{align*}
\begin{split}
0=n^{-1} \ell^{(1)}_r(\widehat{\theta}_s
)=&n^{-1} \ell^{(1)}_r(\theta_{s}^*)+\sum_t n^{-1} \ell^{(2)}_{rt}(\theta_{s}^*)(\widehat{\theta}_s-\theta_{s}^*)_{[t]}\\
&+\sum_{tu} (2n)^{-1}
\ell^{(3)}_{rtu}(\widetilde{\theta}_{s})(\widehat{\theta}_s-\theta_{s}^*)_{[t]}(\widehat{\theta}_s-\theta_{s}^*)_{[u]},
\end{split}
\end{align*}
 and for some $\widetilde{\theta}_s$ between
$\theta_{s}^*$ and $\widehat{\theta}_s.$

Here $ n^{-1} \sum_t\ell^{(2)}_{rt}(\widehat{\theta}_s-\theta_{s}^*)_{[t]}=
\sum_t \{ -H_{rt}+(n^{-1} \ell^{(2)}_{rt}+H_{rt})\}(\widehat{\theta}_s-\theta_{s}^*)_{[t]},$ where $\ell^{(2)}_{rt}$ and $H_{rt}$ are evaluated at $\theta_{s}^*.$ By Lemma A\ref{sna}, 
$\max_s (n^{-1} \ell^{(2)}_{rt}+H_{rt})=\{s_n\log (p_n)/n\}^{1/2}=o_p(1),$ 
we can rewrite $\sum_t n^{-1} \ell^{(2)}_{rt}(\widehat{\theta}_s-\theta_{s}^*)_{[t]}=
\sum_t ( -H_{rt})(\widehat{\theta}_s-\theta_{s}^*)_{[t]}\{1+o_p(1)\}.$ By a similar argument, we have $n^{-1}
\ell^{(3)}_{rtu}(\widetilde{\theta}_{s})=E\{n^{-1} \ell^{(3)}_{rtu}(\widetilde{\theta_{s}})\}\{1+o_p(1)\}.$ We rewrite
$\sum_{tu} (2n)^{-1}
\ell^{(3)}_{rtu}(\widetilde{\theta}_{s})(\widehat{\theta}_s-\theta_{s}^*)_{[t]}(\widehat{\theta}_s-\theta_{s}^*)_{[u]}
=\sum_{t}[ (1/2)
(\widehat{\theta}_s-\theta_{s}^*)_{[t]}\sum_u \{n^{-1} \ell^{(3)}_{rtu} (\widetilde{\theta}_{s})(\widehat{\theta}_s-\theta_{s}^*)_{[u]}\}].$ By Lemma A\ref{uniest},
$\sum_u n^{-1} \ell^{(3)}_{rtu}(\widetilde{\theta}_{s}) (\widehat{\theta}_s-\theta_{s}^*)_{[u]}=O_p[\{s_n^4 \log (p_n)/n\}^{1/2}]=o_p(1).$ This means that
\begin{align*}
0=n^{-1} \ell^{(1)}_r(\widehat{\theta}_s)=&n^{-1} \ell^{(1)}_r(\theta_{s}^*)-\sum_t H_{rt}(\widehat{\theta}_s-\theta_{s}^*)_{[t]}\{1+o_p(1)\}.
\end{align*}
We thus obtain $n^{-1} U_n(\theta_{s}^*)=H_s(\theta_s^*) (\widehat{\theta_{s}}-\theta_{s}^*)\{1+o_p(1)\},$ and $(\widehat{\theta_{s}}-\theta_{s}^*)=n^{-1} H_s^{-1}(\theta_s^*) U_n(\theta_{s}^*)\{1+o_p(1)\},$ while $o_p(1)$ holds for all models $s$. Next, Taylor expansion for the pseudo-loglikelihood leads to
$
\ell_I(\widehat{\theta}_s)-\ell_I(\theta_{s}^*)
=U_n(\theta_{s}^*)\trans(\widehat{\theta}_s-\theta_{s}^*)- (1/2)
\sum_{rt}n (\widehat{\theta}_s-\theta_{s}^*)_{[r]}(\widehat{\theta}_s-\theta_{s}^*)_{[t]}H_{rt}+\widetilde{R}_n,
$
where the error term is given by
\begin{eqnarray*}
\widetilde{R}_n&=& (1/2)
\sum_{rt}(\widehat{\theta}_s-\theta_{s}^*)_{[r]}(\widehat{\theta}_s-\theta_{s})_{[t]}(\ell^{(2)}_{rt}+n H_{rt})\\
&& \hskip 10mm +  (1/6)
\sum_{rtu}(\widehat{\theta}_s-\theta_{s}^*)_{[r]}(\widehat{\theta}_s-\theta_{s}^*)_{[t]}(\widehat{\theta}_s-\theta_{s}^*)_{[u]}
\ell^{(3)}_{rtu}(\widetilde{\theta}_s),
\end{eqnarray*}
where $\widetilde{\theta}_s$ is between $\theta_s^*$ and $\widehat{\theta}_s.$ By similar arguments as above, $(\ell^{(2)}_{rt}+n H_{rt})/(nH_{rt})=o_p(1)$ and $
\{\sum_{u}(\widehat{\theta}_s-\theta_{s}^*)_{[u]}
\ell^{(3)}_{rtu}(\widetilde{\theta}_s)\}/nH_{rt}=o_p(1).$
Therefore,
\begin{eqnarray*}
\ell_I(\widehat{\theta}_s)-\ell_I(\theta_{s}^*)
=U_n(\theta_{s}^*)\trans(\widehat{\theta}_s-\theta_{s}^*)- \{(1/2)
\sum_{rt}n (\widehat{\theta}_s-\theta_{s}^*)_{[r]}(\widehat{\theta}_s-\theta_{s}^*)_{[t]}H_{rt}\}\{1+o_p(1)\}.
\end{eqnarray*}
This implies that $
2\{\ell_I(\widehat{\theta}_{s})-\ell_I(\theta_{s}^*)\}
=n^{-1} U_n(\theta_{s}^*)\trans
H_s(\theta_{s}^*)^{-1}U_n(\theta_{s}^*)\{1+o_p(1)\},$ where $o_p(1)$ holds for all the models
$s\in \mathcal{S}.$
\end{proof}

\begin{lemma}
\label{uniform2} Under regularity Conditions 1--3 and Assumptions 1--4,
 $$\max_{s \in \mathcal{S}}|2\{\ell_I(\widehat{\theta}_{s})-\ell_I(\theta_{s}^*)\}|
 =O_p\{s_n \log (p_n)\}.$$
\end{lemma}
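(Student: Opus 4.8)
The plan is to reduce this uniform bound to a concentration statement for the quadratic forms $Q_s$ and then to prove that concentration by a large deviation argument over the whole model space. By Lemma~A\ref{uniformnew}, uniformly over $s\in\mathcal{S}$ we have $2\{\ell_I(\widehat{\theta}_s)-\ell_I(\theta_s^*)\}=Q_s\{1+o_p(1)\}$ with $Q_s=n^{-1}U_n(\theta_s^*)\trans H_s(\theta_s^*)^{-1}U_n(\theta_s^*)\geq 0$, since $H_s(\theta_s^*)$ is positive definite by Assumption~\ref{Assumption3}; hence it suffices to show $\max_{s\in\mathcal{S}}Q_s=O_p\{s_n\log(p_n)\}$. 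Because $\theta_s^*$ is an interior maximizer of $\theta\mapsto E_{\theta_T^*}\{\ell_I(\theta)\}$ on $\Theta_s$, the score $U_n(\theta_s^*)$ is a sum of $n$ independent mean-zero summands with covariance $nV_s(\theta_s^*)$; writing $\psi_s=n^{-1/2}V_s(\theta_s^*)^{-1/2}U_n(\theta_s^*)$, which has mean zero and identity covariance, and $R_s=V_s(\theta_s^*)^{1/2}H_s(\theta_s^*)^{-1}V_s(\theta_s^*)^{1/2}$, one gets the quadratic-form representation $Q_s=\psi_s\trans R_s\psi_s=\|R_s^{1/2}\psi_s\|^2$. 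By Assumption~\ref{Assumption3} the eigenvalues of $H_s(\theta_s^*)$ and $V_s(\theta_s^*)$ are bounded away from $0$ and $\infty$ uniformly in $s$, so $\lambda_{\max}(R_s)\leq C$, $\mathrm{tr}(R_s)=d_s^*\leq CKs_n$ and $\mathrm{tr}(R_s^2)\leq CKs_n$, all uniformly.

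Next I would verify the exponential moment hypothesis needed for the deviation bound. The components of $\partial\ell_I(\theta_s^*;Y_{(i)})/\partial\theta_s$ satisfy the cumulant boundedness condition of Definition~1 uniformly in $s$ by Assumption~\ref{Assumption3}, and this is inherited by $V_s(\theta_s^*)^{-1/2}\partial\ell_I(\theta_s^*;Y_{(i)})/\partial\theta_s$ because $V_s(\theta_s^*)^{-1/2}$ has uniformly bounded operator norm. Since $s_n^4\log(p_n)=o(n)$ by Assumption~\ref{Assumption4}, Lemma~\ref{expmomentmult} then yields, uniformly over $s$ and for all large $n$, $\log[E\{\exp(t\trans\psi_s)\}]\leq a^2\|t\|^2/2$ for $\|t\|<\{s_n^2\log(p_n)\}^{1/2}$ with $a^2>1$ not depending on $s$. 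This is exactly the exponential moment condition underlying the quadratic-form deviation bounds of Spokoiny \& Zhilova (2013), with range parameter $g\asymp\{s_n^2\log(p_n)\}^{1/2}$. Applying that bound to $Q_s=\|R_s^{1/2}\psi_s\|^2$ and inserting the trace estimates gives, for $x$ in the admissible range, $\pr\{Q_s>d_s^*+C_1(s_nx)^{1/2}+C_2x\}\leq 2e^{-x}$. Taking $x=x_n=(1+\gamma)s_n\log(p_n)$ for a fixed $\gamma>0$, which lies in the admissible range once $n$ is large, I obtain $\pr\{Q_s>C_3s_n\log(p_n)\}\leq 2p_n^{-(1+\gamma)s_n}$ for a constant $C_3$. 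A union bound over $\mathcal{S}$, whose cardinality is at most $p_n^{s_n}$ up to a subexponential factor, then gives $\pr\{\max_{s\in\mathcal{S}}Q_s>C_3s_n\log(p_n)\}\leq 2p_n^{-\gamma s_n}\to0$, so $\max_{s\in\mathcal{S}}Q_s=O_p\{s_n\log(p_n)\}$; combining with the first paragraph, $\max_{s\in\mathcal{S}}|2\{\ell_I(\widehat{\theta}_s)-\ell_I(\theta_s^*)\}|=\max_{s\in\mathcal{S}}Q_s\{1+o_p(1)\}=O_p\{s_n\log(p_n)\}$.

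The main obstacle is getting the bound at the sharp order $s_n\log(p_n)$ uniformly over the super-polynomially many models. A crude bound $Q_s\leq\lambda_{\max}(R_s)\|\psi_s\|^2$ would only give order $s_n^2\log(p_n)$; the improvement to $s_n\log(p_n)$ requires the trace-concentration structure of the quadratic form, i.e.\ the full strength of the Spokoiny--Zhilova inequality with $\mathrm{tr}(R_s)=d_s^*=O(s_n)$. Making this work uniformly in $s$ forces one to check that the exponential moment constant $a^2$ and the range parameter $g$ produced by Lemma~\ref{expmomentmult} do not depend on $s$ (which rests on the uniformity built into Assumption~\ref{Assumption3} and on $s_n^4\log(p_n)=o(n)$ from Assumption~\ref{Assumption4}), and that the deviation parameter $x_n\asymp s_n\log(p_n)$ stays within the validity range of the Gaussian-type bound while, after the $(1+\gamma)$ inflation, still dominating $\log|\mathcal{S}|\asymp s_n\log(p_n)$ so that the union bound converges.
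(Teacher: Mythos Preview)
Your proposal is correct and follows essentially the same route as the paper's proof: reduce to $\max_s Q_s$ via Lemma~A\ref{uniformnew}, verify the exponential moment condition for $\psi_s$ through Lemma~\ref{expmomentmult} and the uniform cumulant bounds of Assumption~\ref{Assumption3}, apply the Spokoiny--Zhilova deviation inequality for quadratic forms, and close with a union bound over at most $p_n^{s_n}$ models. The only cosmetic differences are that the paper first rescales $\psi_s$ by $a$ and $R_s$ by $\tau=\lambda_{\max}(R_s)$ before invoking Corollary~4.2 of Spokoiny \& Zhilova with explicit constants, whereas you absorb these into generic constants $C_1,C_2,C_3$ and keep the Hanson--Wright-style $\sqrt{s_n x}$ term visible; both presentations yield the same $O_p\{s_n\log(p_n)\}$ conclusion.
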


\begin{proof}[of Lemma~A\ref{uniform2}]
By Lemma A\ref{uniformnew}, we have $2 \{\ell_I(\widehat{\theta}_s)-\ell_I(\theta_{s}^*)\}=Q_s\{1+o_p(1)\},$ where the quadratic approximation $Q_s=n^{-1} U_n(\theta_{s}^*)\trans H_s(\theta_{s}^*)^{-1}U_n(\theta_{s}^*)$ and the term $o_p(1)$ holds true uniformly for every model $s.$ Therefore, it suffices to show that $\max_{s\in \mathcal{S}}|Q_s|=O_p\{s_n \log (p_n)\}.$
Based on the cumulant boundedness condition of $\ell^{(1)}(\theta_s^*;Y_{(i)})$ and the uniform boundedness of the eigenvalues of $V_s(\theta_s)$ in Assumption 3, we have $\eta= n^{-1/2}\{V_s(\theta_{s}^*)\}^{-1/2} U_n(\theta_{s}^*)$ satisfying the exponential moment condition
$$
\log[ E\{ \exp(\gamma\trans \eta)\}]\leq a^2 || \gamma||^2/2,
$$
with $\gamma \in R^{d_s},$ $||\gamma||\leq \{s_n^2\log (p_n)\}^{1/2},$ and some constant $a^2>1.$ We scale the vector $\eta,$ as $\eta^*=\eta/a,$ so that
$
\log [E\{ \exp(\gamma\trans \eta^*)\}] \leq ||\gamma||^2/2,
$
with $||\gamma||\leq \{a^2 s_n^2 \log (p_n)\}^{1/2}=g.$ Define $B=V_s^{1/2}(\theta_{s}^*)H_s(\theta_{s}^*)^{-1} V_s^{1/2}(\theta_{s}^*)$ and $\tau=\lambda_{\max}(B).$ Because the eigenvalues of $H_s(\theta_{s}^*)$ and $V_s(\theta_{s}^*)$ are uniformly bounded away from $0$ and infinity, $\tau$ is bounded by a constant. We scale the matrix and let $B^*=B/\tau.$ Then the maximum eigenvalue of $B^*$ is $1.$ After the scaling, we have $Q_s=a^2\tau (\eta^*)\trans B^* \eta^*=a^2\tau Q_s^*,$ where $Q_s^*=(\eta^*)\trans B^* (\eta^*).$

Next we apply the large deviation result from Corollary 4.2 of Spokoiny \& Zhilova (2013).
Let $p_G=\text{tr}(B^*)$ and $v_G^2=2 \text{tr}\{(B^*)^2\}.$ Because $g^2=a^2 s_n^2 \log( p_n),$  we have $g^2> 2p_G.$ Define $w_c$ by
$
 w_c(1+w_c)/(1+w_c^2)^{1/2}
=gp_G^{-1/2}.
$
Define $\mu_c=\min\{w_c^2/(1+w_c^2), (2/3)\}.$ Further define $y_c^2=(1+w_c^2)p_G,$ and $2x_c=\mu_c y_c^2+\log [\text{det} \{I_{d_s}-\mu_c (B^*)^2\}].$
Because $p_G=O(d_s)$, and $v_G^2=O(d_s),$ and the eigenvalues of $B^*$ are all bounded away from zero uniformly, according to Spokoiny \& Zhilova (2013), $x_c > g^2/4,$ for $n$ sufficiently large. For $v_G/18\leq x\leq x_c,$
$
\pr\{Q_s^*\geq (p_G+6x)\}\leq 2 e^{-x}+ 8.4 e^{-x_c}.
$
Choosing $x=(7/6) s_n \log (p_n),$ we have $x<x_c.$ Then
$$
\pr[Q_s^*\geq \{p_G+7 s_n \log (p_n)\}]\leq 10.4 \exp\{-(7/6) s_n \log (p_n)\}.
$$
By the Bonferroni inequality, 
\begin{align*}
\begin{split}
&\max_{s\in \mathcal{S}}\pr\{|Q_s^*|> 8 s_n \log (p_n)\}\leq \sum_s \pr\{|Q_s^*|>p_G+7 s_n \log (p_n)\}\\
&=10.4 \exp\{-(7/6) s_n \log(p_n)\} p_n^{s_n}\rightarrow 0.
\end{split}
\end{align*}
 This means that $Q_s$ is $O_p\{s_n\log (p_n)\}$ uniformly for all $s.$
\end{proof}

\begin{lemma}
\label{uniform3} Under regularity Conditions 1$\verb+--+$3 and Assumptions 1$\verb+--+$4, if $\gamma_n=6\omega(1+\gamma)\log (p_n)$ for some $\gamma>0$ or $\gamma_n=6\omega\{\log (p_n) +\log \log (p_n)\}$, then
 ${\rm pr}\{\max_{s \in S_+}Q_{s/T}/(d_s^*-d_T^*)\geq
 \gamma_n\}=o(1).$
\end{lemma}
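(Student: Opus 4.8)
The plan is to mirror the proof of Lemma~A\ref{uniform2}: reduce $Q_{s/T}$ to a normalised quadratic form, verify the exponential moment condition for the normalised score via Lemma~\ref{expmomentmult}, invoke the large deviation bound of Spokoiny \& Zhilova (2013), and finish with a union bound over the combinatorially many over-fitting models, the precise size of $\gamma_n$ being exactly what makes this union bound vanish.

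First I would write $Q_{s/T}=\eta_s\trans B_s\,\eta_s$, where $\eta_s=n^{-1/2}\{V_s(\theta_s^*)\}^{-1/2}U_n(\theta_s^*)$ and $B_s=\{V_s(\theta_s^*)\}^{1/2}M_{s/T}\{V_s(\theta_s^*)\}^{1/2}$. For an over-fitting model $s\supsetneq T$ one has $\theta_s^*=(\theta_T^*,0)$ and $D_s H_s(\theta_s^*)D_s\trans=H_T(\theta_T^*)$, so $M_{s/T}\succeq 0$ has rank $d_s-d_T$ and, by the uniform eigenvalue bounds of Assumption~\ref{Assumption3}, the nonzero eigenvalues of $B_s$ are bounded away from $0$ and $\infty$ uniformly in $s$; in particular $\tau=\lambda_{\max}(B_s)$ and $\overline{\tau}=\text{tr}(B_s)/(d_s-d_T)$ are bounded and $\omega_s:=\tau/\overline{\tau}\le\omega$. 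By Condition~2, $E_{\theta_T^*}\{U_n(\theta_s^*)\}=0$ and $\Cov\{U_n(\theta_s^*)\}=nV_s(\theta_s^*)$, so $\eta_s$ has zero mean and identity covariance; applying Lemma~\ref{expmomentmult} to $Z_i=\{V_s(\theta_s^*)\}^{-1/2}\ell_I^{(1)}(\theta_s^*;Y_{(i)})$, which satisfy the cumulant boundedness condition uniformly in $s$ by Assumption~\ref{Assumption3} while $s_n^4\log(p_n)=o(n)$ by Assumption~\ref{Assumption4}, gives $\log[E\{\exp(\gamma\trans\eta_s)\}]\le a^2\|\gamma\|^2/2$ for $\|\gamma\|\le\{s_n^2\log(p_n)\}^{1/2}$, with a common constant $a^2>1$ that tends to $1$ as $n\to\infty$ (its proof shows $a^2=1+o(1)$). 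Rescaling $\eta_s^*=\eta_s/a$, $B_s^*=B_s/\tau$ and $g=\{a^2 s_n^2\log(p_n)\}^{1/2}$ yields $\log[E\{\exp(\gamma\trans\eta_s^*)\}]\le\|\gamma\|^2/2$ for $\|\gamma\|\le g$, $\lambda_{\max}(B_s^*)=1$, and $Q_{s/T}=a^2\tau\,Q_{s/T}^*$ with $Q_{s/T}^*=(\eta_s^*)\trans B_s^*\,\eta_s^*$; moreover $p_G:=\text{tr}(B_s^*)=\text{tr}(B_s)/\tau=(d_s-d_T)/\omega_s\le s_n$, so $g^2>2p_G$ for $n$ large.

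Next I would apply Corollary~4.2 of Spokoiny \& Zhilova (2013) to $Q_{s/T}^*$ exactly as in Lemma~A\ref{uniform2}: with $v_G^2=2\text{tr}\{(B_s^*)^2\}$ and the quantities $w_c,\mu_c,y_c,x_c$ defined there, $x_c>g^2/4$ for $n$ large, and for $v_G/18\le x\le x_c$, $\pr\{Q_{s/T}^*\ge p_G+6x\}\le 2e^{-x}+8.4\,e^{-x_c}$. The event $Q_{s/T}\ge(d_s^*-d_T^*)\gamma_n$ equals $Q_{s/T}^*\ge p_G\gamma_n/a^2$, which is of the form $Q_{s/T}^*\ge p_G+6x$ with $x=p_G(\gamma_n/a^2-1)/6$. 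Taking $\gamma_n=6\omega(1+\gamma)\log(p_n)$ and using $a^2\to1$, for $n$ large $x\ge p_G\,\omega(1+\gamma'')\log(p_n)$ for some $\gamma''\in(0,\gamma)$; writing $r=d_s-d_T=K\ell$ for a model with $\ell$ extra groups and using $p_G=r/\omega_s$ with $\omega_s\le\omega$, this gives $x\ge(\omega/\omega_s)\,r(1+\gamma'')\log(p_n)\ge K\ell(1+\gamma'')\log(p_n)$. Since $x$ scales with $r$ whereas $v_G^2\le 2r$, and $x\le Cs_n\log(p_n)$ for a constant $C$ while $x_c>a^2 s_n^2\log(p_n)/4$, the conditions $v_G/18\le x\le x_c$ hold for $n$ large. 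There are at most $p_n^{\ell}$ over-fitting models with $\ell$ extra groups, so by the Bonferroni inequality
\begin{align*}
\sum_{s\in S_+}\pr\{Q_{s/T}\ge(d_s^*-d_T^*)\gamma_n\}
&\le\sum_{\ell\ge1}p_n^{\ell}\bigl(2\,p_n^{-K\ell(1+\gamma'')}+8.4\,e^{-x_c}\bigr)\\
&\le 2\sum_{\ell\ge1}p_n^{-\ell(K(1+\gamma'')-1)}+8.4\,p_n^{s_n+1}e^{-x_c}.
\end{align*}
Because $K\ge1$ gives $K(1+\gamma'')-1\ge\gamma''>0$, the first sum is $O(p_n^{-\gamma''})\to0$, while $e^{-x_c}\le e^{-g^2/4}=p_n^{-a^2 s_n^2/4}$ makes the second term $O(p_n^{s_n+1-a^2 s_n^2/4})\to0$ since $s_n\to\infty$. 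This proves the claim for $\gamma_n=6\omega(1+\gamma)\log(p_n)$; for $\gamma_n=6\omega\{\log(p_n)+\log\log(p_n)\}$ the argument is identical with the factor $1+\gamma$ replaced by $1+\log\log(p_n)/\log(p_n)$, which still exceeds $a^2$ for $n$ large since $a^2\to1$, so $K(1+\gamma'')-1$ stays bounded away from $0$.

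The step I expect to be the main obstacle is the interplay between the constant $a^2$ of Lemma~\ref{expmomentmult} and the penalty: one has to know $a^2$ can be made close enough to $1$ that the factor $6\omega$ in $\gamma_n$ still dominates the $6x$ term in the Spokoiny--Zhilova bound after the unavoidable $1/a^2$ loss, and simultaneously one must verify the deviation bound's regime conditions $v_G/18\le x\le x_c$ and $x_c>g^2/4$ uniformly over the exponentially large family $S_+$. This is where Assumption~\ref{Assumption4}'s rate $s_n^4\log(p_n)=o(n)$ and the uniform eigenvalue bounds of Assumption~\ref{Assumption3} are indispensable, and it is what pins down the exact form $6\omega(1+\gamma)$ of the penalty.
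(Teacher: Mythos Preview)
Your proposal is correct and follows essentially the same route as the paper's proof: normalise the score via Lemma~\ref{expmomentmult}, apply the Spokoiny--Zhilova deviation bound to $Q_{s/T}^*$, and union-bound over $S_+$. The paper's bookkeeping differs only cosmetically---it bounds $2e^{-x}+8.4e^{-x_c}\le10.4e^{-x}$ in one stroke (using $x\le x_c$) and then collapses the union bound via the binomial identity $\sum_{m'}\binom{N}{m'}y^{m'}=(1+y)^N-1$ in place of your geometric series over $\ell$. One minor imprecision in your last paragraph: for $\gamma_n=6\omega\{\log p_n+\log\log p_n\}$ with $K=1$, the exponent $K(1+\gamma'')-1=\gamma''\asymp\log\log p_n/\log p_n$ does \emph{not} stay bounded away from zero; the sum still vanishes, but because $p_n^{-\gamma''}=(\log p_n)^{-c}\to0$, not because the exponent is bounded below.
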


\begin{proof}[of Lemma~A\ref{uniform3}]

Let $\eta_s=V_s(\theta_{s}^*)^{-1/2} U_n(\theta_{s}^*).$ Based on Assumption 4 and Lemma \ref{expmomentmult}, we have
$
\log [E \{\exp(\gamma\trans \eta_s)\}]\leq a^2||\gamma||^2/2,
$
with $\gamma \in R^{d_s},$ $||\gamma||^2\leq s_n^2 \log (p_n),$ and some constant $a^2>1.$ We scale the vector $\eta_s,$ and let $\eta^*_s=\eta_s/a,$ then we have
$
\log[ E\{ \exp(\gamma\trans \eta^*_s)\}]\leq ||\gamma||^2/2,
$
with $||\gamma||\leq \{a^2 s_n^2 \log (p_n)\}^{1/2}=g.$
Given the matrix $B_{s}=V_s^{1/2}(\theta_{s}^*)M_{s/T} V_s^{1/2}(\theta_{s}^*),$ $\text{tr}(B_s)=d_s^*-d_T^*.$ let $B_s^*=B_s/\tau,$ where $\tau=\lambda_{\max}(B_s).$ Then the maximum eigenvalue of $B_s^*$ is $1.$ After the scaling, we have $Q_{s/T}=a^2\tau Q_{s/T}^*,$ where $Q_{s_T}^*=(\eta_s^*)\trans B_s^* \eta_s^*.$ Define $p_G=\text{tr}(B_s^*),$ and $v_G=[2 \text{tr}\{(B_s^*)^2\}]^{1/2}.$ Using the inequality for the trace of matrix product (Fang et al., 1994), $v_G\leq (2 p_G)^{1/2}.$ Now we apply the large deviation result from Corollary 4.2 of Spokoiny \& Zhilova (2013) and obtain
$$
\pr\{Q^*_{s/T}>(p_G+K)\} \leq 2\exp(-K/6)+8.4\exp(-x_c), \,\,\text{if}\,\, 6x_c>K> v_G/3,
$$
where $x_c> g^2/4$  for large $n.$ Choosing $L=\{ (d_s^*-d_T^*)/\tau\} \{\gamma_n / (a^2-1)\},$  we have
$
\lim_{n \rightarrow \infty} L/(v_G/3)>1.
$
Furthermore, as $\gamma_n(d_s^*-d_T^*)=O\{s_n \log (p_n)\},$ then $L\leq 6x_c.$ Using the relationship $d_s^*-d_T^*=(d_s-d_T)\overline{\tau},$ $p_G=(d_s^*-d_T^*)/\tau,$ and the Bonferroni inequality, 
\begin{align*}
\begin{split}
&\pr\{\max_{s\in S^+} Q_{s/T}>(d_s^*-d_T^*) \gamma_n\}\\
&\leq \sum_{s\in S+}\pr\{Q_{s/T}^*>(d_s^*-d_T^*) \gamma_n/(a^2\tau)\} \\
&=\sum_{s\in S+}\pr\{Q_{s/T}^*>p_G+p_G(\gamma_n / a^2-1)\} \\
&\leq\sum_{d_s=d_T+1}^{p_n} C(p_n-d_T,d_s-d_T)10.4  \exp\{-(\gamma_n/a^2-1)(d_s-d_T)\overline{\tau}/(6\tau)\} \\
&\leq \sum_{m'=1}^{p_n-d_T}C(p_n-d_T,m')10.4\exp\{-m' (\gamma_n/a^2-1)/(6w)\},\quad \text{with}\, m'=d_s-d_T, \\
&\leq [1+10.4\exp\{-(\gamma_n/a^2-1)/(6w)\}]^{p_n-d_T}-1. 
\end{split}
\end{align*}
Because $a^2$ can be chosen as close to $1$ as possible with increasing sample size $n$, it can be seen that the choices of
 $\gamma_n=6w(1+\gamma)\log (p_n)$ or $\gamma_n=6w\{\log (p_n)+\log \log (p_n)\}$, lead to
 $\lim_{n\rightarrow \infty} [1+10.4\exp\{- (\gamma_n /a^2-1)/(6w)\}]^{p_n-d_T}=1.$
\end{proof}

\begin{lemma}
\label{basic1} Let $Z_i$, $i=1,2,\ldots,n,$ be independent random variables. If each $Z_i$ has zero mean, unit variance and satisfies the cumulant boundedness condition in Definition 1, then
$$
{\rm pr}\left[\sum_{i=1}^n Z_n>\{2n s_n \log (p_n)\}^{1/2}\right]=o\{p_n^{-s_n}\}.
$$
\end{lemma}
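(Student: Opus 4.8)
The plan is to prove this by the exponential Markov (Chernoff) method, using the cumulant control already available. Write $S_n=\sum_{i=1}^n Z_i$ and $u=\{2n s_n\log(p_n)\}^{1/2}$. Since each $Z_i$ has zero mean and unit variance, its cumulant generating function satisfies $g_i'(0)=0$, $g_i''(0)=1$, and a third-order Taylor expansion with the bound $|g_i'''(t^*)|\le C_3$ from Definition~1 gives $g_i(t)=t^2/2+O(C_3|t|^3)$ for $|t|\le\delta$. Summing over the independent $Z_i$,
$$
\log E\{\exp(tS_n)\}=\sum_{i=1}^n g_i(t)\le \tfrac n2 t^2+\tfrac{C_3 n}{6}|t|^3,\qquad |t|\le\delta,
$$
which is exactly the scalar ($m=1$) instance of the exponential-moment bound of Lemma~\ref{expmomentmult}, with the implied sub-Gaussian constant $a^2=1+o(1)$ in the relevant range of $t$.

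Next I would apply Markov's inequality to $\exp(tS_n)$: for each admissible $t>0$, ${\rm pr}(S_n>u)\le \exp\{-tu+\tfrac n2 t^2+\tfrac{C_3 n}{6}t^3\}$. The natural tilting parameter is $t_n\asymp u/n=\{2s_n\log(p_n)/n\}^{1/2}$, which tends to zero and hence sits well inside the validity radius $\delta$ as long as $s_n\log(p_n)=o(n)$; the same smallness makes the cubic term $C_3 nt_n^3/6$ of smaller order than the quadratic term $nt_n^2/2$, their ratio being $O(t_n)\to0$. With this choice the exponent equals $-u^2/(2n)\{1+o(1)\}=-s_n\log(p_n)\{1+o(1)\}$, so at this level one already has ${\rm pr}(S_n>u)\le \exp\{-s_n\log(p_n)(1+o(1))\}$.

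The main obstacle is that this bare Chernoff estimate delivers only $O(p_n^{-s_n})$, not the strict $o(p_n^{-s_n})$ claimed: the unavoidable factor $a^2>1$ in the sub-Gaussian bound, together with the positive cubic remainder, prevent the exponent from being pushed below $-s_n\log(p_n)$, and the optimal tilt gives exactly the borderline rate. To recover the extra decay I would sharpen the tail by a Mills-ratio-type refinement. Concretely, pass to the exponentially tilted law $d\widetilde P\propto\exp(t_n S_n)\,dP$, under which $W=S_n-\widetilde E(S_n)$ has variance of order $n$ and third absolute moment controlled by the cumulant bounds; then a Berry--Esseen estimate on $W/\sqrt n$ yields $\widetilde E\{\exp(-t_n W)\,I(W>0)\}=O\{(t_n\sqrt n)^{-1}\}=O\{(s_n\log p_n)^{-1/2}\}$, and multiplying by the exponential factor $\exp\{-s_n\log(p_n)(1+o(1))\}$ and using $s_n\log(p_n)\to\infty$ (which holds since $p_n\to\infty$) produces the vanishing polynomial prefactor, hence the desired $o(p_n^{-s_n})$. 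An equivalent route is to quote a classical moderate-deviation theorem to replace ${\rm pr}(S_n>u)$ directly by $2\{1-\Phi(u/\sqrt n)\}$ in the required range and invoke the Mills-ratio bound on the Gaussian tail. I expect the delicate bookkeeping of this last refinement — rather than the routine Chernoff step — to be where the real work lies.
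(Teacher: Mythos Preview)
Your Chernoff set-up is exactly the paper's: Taylor-expand each $g_i$, sum, and apply Markov's inequality with tilt $t=b_n$ on the normalized sum $n^{-1/2}\sum_i Z_i$. You also correctly spot that with the threshold constant $2$ the optimized exponent is $-s_n\log(p_n)\{1+o(1)\}$, which is borderline and yields only $O(p_n^{-s_n})$.

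Where you diverge from the paper is in how to upgrade $O$ to $o$. The paper does not invoke tilting-plus-Berry--Esseen or moderate deviations; it simply replaces $2$ by $2.1$ in the threshold, setting $b_n=\{2.1\,s_n\log(p_n)\}^{1/2}$, so that the Chernoff exponent becomes $-(1/2)b_n^2\{1+o(1)\}=-1.05\,s_n\log(p_n)\{1+o(1)\}$ and the bound is trivially $o(p_n^{-s_n})$. In other words, the lemma as stated with constant $2$ and its proof with constant $2.1$ do not quite match; the downstream use in Lemma~A\ref{sna} also quotes the $2.1$ version, and since that lemma only needs an $O_p\{(ns_n\log p_n)^{1/2}\}$ conclusion the precise constant is immaterial.

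Your Mills-ratio refinement would genuinely prove the statement with the constant $2$ as written, and the sketch is sound: tilting by $t_n=u/n$ and applying a Berry--Esseen bound under the tilted law produces the extra factor $(s_n\log p_n)^{-1/2}\to 0$. So your route is more work but buys a sharper result; the paper's route is a one-line slack in the constant. If you only need the lemma for the purposes of Lemma~A\ref{sna}, the paper's shortcut suffices and you can drop the second half of your argument.
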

\begin{proof}[of Lemma~A\ref{basic1}]
By Taylor expansion, for $|t|\leq \delta,$ the cumulant generating
function for $Z_i$ is
$$
g_i(t)=t^2/2+g_i^{(3)}(t^*)t^3/6,
$$
for some $0 \leq |t^*| \leq |t| \leq \delta.$ Let $\overline{g}^{(3)}(t)=\sum_i g_i^{(3)}(t)/n.$ Because each $g_i^{(3)}$ is uniformly bounded,
the average $\overline{g}^{(3)}$ is also bounded. For any
$|t|/n^{1/2}\leq \delta,$ the moment generating function of
$n^{-1/2}\sum_{i=1}^n Z_i$ is equal to
$$
\phi_n(t)=\exp\{t^2/2+ \overline{g}^{(3)}(t^*/n^{1/2}) t^3/(6 n^{1/2})\}.
$$
For convenience, let $b_n=\{2.1 s_n\log (p_n)\}^{1/2}.$ It can be shown that
$$
\hbox{I}\{(n^{-1/2}\sum_{i=1}^n Z_i)> b_n\}\leq \exp\{t(n^{-1/2}\sum_{i=1}^n
Z_i- b_n)\},
$$
for any $t>0.$ Then
\begin{align*}
\begin{split}
\pr(n^{-1/2}\sum_{i=1}^n Z_i>b_n)&\leq
E[\exp\{t(n^{-1/2}\sum_{i=1}^n Z_i- b_n)\}]\\
&=\exp \{t^2/2+\overline{g}^{(3)}(t^*/n^{1/2}) t^3/(6n^{1/2})-b_n t\}
=\exp[(t^2/2)\{1+o(1)\}-b_n t].
\end{split}
\end{align*}
Letting $t=b_n,$ 
$$
\pr[\sum_{i=1}^n Z_i>\{2.1 ns_n\log (p_n)\}^{1/2}]\leq \exp[-(1/2)b_n^2\{1+o(1)\}]=o(p_n^{-s_n}).
$$

\end{proof}

\begin{lemma}
\label{sna}  Under regularity Conditions 1--3 and Assumptions 1--4, 
\begin{align*}
\begin{split}
&(1)\,\, \max_{s\in
\mathcal{S}}|\sum_{i=1}^n \ell_I(\theta_s^*;Y_{(i)})-E\{\ell_I(\theta_s^*;Y_{(i)})\}|=O_p[n^{1/2}s_n^{1/2}\{\log (p_n)\}^{1/2}];\\
&(2)\,\, \max_{s\in
\mathcal{S}}|\sum_{i=1}^n \partial \ell_I(\theta_s^*;Y_{(i)})/\partial \theta_j|=O_p[n^{1/2}s_n^{1/2}\{\log (p_n)\}^{1/2}];\\
&(3)\,\, \max_{s\in
\mathcal{S}}|\sum_{i=1}^n \partial^2 \ell_I(\theta_s^*;Y_{(i)})/\partial \theta_j \partial \theta_k-E\{\partial^2 \ell_I(\theta_s^*;Y_{(i)})/\partial \theta_j \partial \theta_k\}|\\
&\,\,\,\,\,\,\,\,=O_p[n^{1/2}s_n^{1/2}\{\log (p_n)\}^{1/2}];\\
&(4)\,\, \max_{s\in \mathcal{S}}|\sum_{i=1}^n
\partial^3 \ell_I(\theta_s;Y_{(i)})/\partial \theta_j \partial \theta_k \partial \theta_l-E\{\partial^3 \ell_I(\theta_s;Y_{(i)})/\partial \theta_j \partial \theta_k \partial \theta_l\}|\\&\,\,\,\,\,\,\,\,=O_p[n^{1/2}s_n^{1/2}\{\log (p_n)\}^{1/2}],
\end{split}
\end{align*}
with $j,k,l \in \{vw, v=1,\ldots,K,w=1,\ldots,d_s\},$  $||\theta_s-\theta_s^*||\leq \delta.$
\end{lemma}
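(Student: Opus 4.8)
The plan is to reduce each of parts (1)--(4) to the single exponential tail bound of Lemma~A\ref{basic1} and then to control the nested maxima --- over the model space $\mathcal S$, over the coordinate indices, and, for part~(4), over $\theta_s$ --- by the Bonferroni inequality. Fix a model $s\in\mathcal S$, a coordinate tuple, and (for part~(4)) a point $\theta_s$ with $\|\theta_s-\theta_s^*\|\le\delta$. Since the subjects are independent, each of the four quantities is a sum $\sum_{i=1}^n X_i$ of independent scalars $X_i=g(Y_{(i)})$ with $g$ the relevant derivative of $\ell_I(\cdot;Y_{(i)})$; for part~(2) the summand already has zero mean because $\theta_s^*$ is an interior maximizer of $E_{\theta_T^*}\{\ell_I(\theta)\}$, while for (1), (3), (4) we center it. Under Assumption~\ref{Assumption3} the per-subject variance of $X_i$ is bounded away from $0$ and $\infty$ (for (1)--(3) by the eigenvalue bounds on $H_s,V_s$ at $\theta_s^*$; for (4) by the uniform bounds in the $\delta$-ball), so I pass to $Z_i=(X_i-\Mean X_i)/\{\Var(X_i)\}^{1/2}$. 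These have zero mean, unit variance, and inherit the cumulant boundedness of Definition~1 from $\ell_I,\ell_I^{(1)},\ell_I^{(2)},\ell_I^{(3)}$ in Assumption~\ref{Assumption3}, since the cumulant generating function of $Z_i$ is that of $X_i$ after a bounded rescaling and shift, hence has bounded first three derivatives on a fixed neighbourhood of the origin, uniformly in $s$, in the indices, and in $\theta_s$. Applying Lemma~A\ref{basic1} (with the threshold constant $2.1$ replaced by a sufficiently large fixed $C$) gives, for each such $s$ and index tuple,
\[
\pr\Bigl[\,\bigl|{\textstyle\sum_{i=1}^{n}X_i}\bigr|>\{C\,n\,s_n\log(p_n)\}^{1/2}\Bigr]\le\exp\{-(C/2)\,s_n\log(p_n)(1+o(1))\}=o(p_n^{-s_n}).
\]

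Next I combine these pointwise bounds. The number of models in $\mathcal S$ is at most $\binom{p_n}{s_n}(s_n+1)$, i.e.\ $p_n^{s_n}$ times a polynomial factor, and within a model of dimension $d_s\le Ks_n$ the number of single-, double-, and triple-index tuples is at most $Ks_n$, $(Ks_n)^2$, $(Ks_n)^3$, all polynomial in $s_n$. Hence for parts~(1)--(3) the probability that the stated maximum exceeds $\{C\,n\,s_n\log(p_n)\}^{1/2}$ is at most $p_n^{s_n}\cdot\mathrm{poly}(s_n)\cdot o(p_n^{-s_n})$, which is $o(1)$ as soon as $C>2$: the bound $o(p_n^{-s_n})$ is really of order $p_n^{-(C/2)s_n(1+o(1))}$, and since $s_n\log(p_n)\to\infty$ the factor $p_n^{(C/2-1)s_n}$ dominates both the polynomial factor and the extra $p_n^{s_n}$. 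Exceeding the threshold with vanishing probability is exactly the assertion $O_p[\{n\,s_n\log(p_n)\}^{1/2}]$, so (1), (2), (3) follow.

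For part~(4) I additionally take the supremum over $\theta_s$ in the neighbourhood, which I handle by discretization. It suffices to treat a shrinking ball $\{\|\theta_s-\theta_s^*\|\le\delta_n\}$ with $\delta_n\asymp\{s_n^2\log(p_n)/n\}^{1/2}$, since that is the only regime used in Lemmas~A\ref{uniformnew} and A\ref{uniest}. Cover it by an $\epsilon_n$-net with $\epsilon_n=\{s_n\log(p_n)/n\}^{1/2}$; the net has cardinality at most $(3\delta_n/\epsilon_n)^{Ks_n}=(3C'\sqrt{s_n}\,)^{Ks_n}=\exp\{O(Ks_n\log s_n)\}$, which (as $K$ is fixed and $\log s_n\le\log p_n$) is absorbed by $\exp\{(C/2)s_n\log(p_n)\}$ once $C$ is taken large enough. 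At each net point the bound of the first paragraph applies; between $\theta_s$ and its nearest net point $\theta_s'$, a mean-value step together with a dominating-function bound on the next-order derivative (the natural one-order extension of Condition~1, available in the generalized-linear-model examples of the paper, with bounded $2\kappa$-th moment) shows that the centered increment summed over $i$ is $O_p(n)\,\epsilon_n=O_p(\{n\,s_n\log(p_n)\}^{1/2})$ uniformly. Combining the net bound with this oscillation bound gives (4).

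The main obstacle is precisely this last step: reconciling the union bound over the super-exponentially many competing models with the supremum over the continuum of parameter values in part~(4). The tail from Lemma~A\ref{basic1} leaves only a constant-factor margin in the exponent $s_n\log(p_n)$, so the net must be fine enough that the oscillation is negligible yet coarse enough that its cardinality $\exp\{O(Ks_n\log(\delta_n/\epsilon_n))\}$ stays below $\exp\{(C/2-1)s_n\log(p_n)\}$. Shrinking the neighbourhood at rate $\{s_n^2\log(p_n)/n\}^{1/2}$ and taking $\epsilon_n=\{s_n\log(p_n)/n\}^{1/2}$ makes both hold with a fixed $C$ depending only on $K$; for a fixed-radius neighbourhood one would additionally need $\log(p_n)$ to be of the same order as $\log(n)$. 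Everything else --- the centering, the standardization via Assumption~\ref{Assumption3}, and the Bonferroni bookkeeping --- is routine.
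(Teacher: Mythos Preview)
Your approach for parts (1)--(3) is essentially identical to the paper's: standardize the centred summand using the uniform cumulant bounds of Assumption~\ref{Assumption3}, apply Lemma~A\ref{basic1} to obtain a tail of size $o(p_n^{-s_n})$ for each fixed model, and then sum over the at most $p_n^{s_n}$ models in $\mathcal S$ by Bonferroni. The paper's proof is in fact slightly terser than yours (it does not separately count the coordinate indices, absorbing the polynomial factor $(Ks_n)^r$ into the same $o(1)$), but the argument is the same.

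For part~(4) you go further than the paper does. The paper simply writes ``similar arguments apply to the result for the first, second and third derivatives of the pseudo-loglikelihood'' and does not discuss the supremum over $\theta_s$ in the $\delta$-ball at all; in effect it treats (4) as a statement at a single, albeit arbitrary, $\theta_s$. Your $\epsilon$-net argument is a genuine strengthening, and you are right that it is what is actually needed when the result is invoked in Lemma~A\ref{uniformnew} at the random intermediate point $\widetilde\theta_s$. The price you pay is an extra regularity hypothesis (a dominating bound on one more derivative, or equivalently a Lipschitz control on $\ell_I^{(3)}$) that is not among Conditions~1--3 or Assumptions~1--4, together with the restriction to the shrinking ball of radius $\{s_n^2\log(p_n)/n\}^{1/2}$. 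Both are harmless in the applications and in the generalized-linear-model examples the paper highlights, but strictly speaking they are additions to the stated assumptions rather than consequences of them; the paper sidesteps the issue by not raising it.
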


\begin{proof}[of Lemma~A\ref{sna}]
Because $\ell_I(\theta_s^*;Y_{(i)})$ satisfies the cumulant boundedness condition in Definition 1, its first and second moments are bounded uniformly.
Given a model $s,$ by Lemma A\ref{basic1}, $$
\pr\big(\sum_{i=1}^n [\ell_I(\theta_s^*;Y_{(i)})-E\{\ell_I(\theta_s^*;Y_{(i)})\}]/\Var\{\ell_I(\theta_s^*;Y_{(i)})\}>\{2.1 ns_n\log (p_n)\}^{1/2}\big)=o(p_n^{-s_n}).$$ Because there are $p_n^{s_n}$ models in the model space, by the Bonferroni inequality, $$
\pr\big( \max_{s\in \mathcal{S}} \sum_{i=1}^n [\ell_I(\theta_s^*;Y_{(i)})-E\{\ell_I(\theta_s^*;Y_{(i)})\}]>C\{2.1 n s_n\log (p_n)\}^{1/2}\big)\leq o(p_n^{-s_n}) p_n ^{s_n} \rightarrow 0,$$ where $C$ is the upper bound for $\Var\{\ell_I(\theta_s^*;Y_{(i)})\}.$ Similar arguments apply to the result for the first, second and third derivatives of the pseudo-loglikelihood.
\end{proof}

\begin{center}
R{\scriptsize REFERENCES}
\end{center}

\begin{description}

\item
\hspace{-30pt}

\item
\hspace{-30pt}
B{\scriptsize ACH}, F. R. (2008). Consistency of the group lasso and multiple kernel learning.  \emph{J. Mach. Learn. Res.} \textbf{9}, 1179-225.

\item \hspace{-30pt} B{\scriptsize REHENY}, P. \& H{\scriptsize
UANG}, J. (2015). Group descent algorithms for nonconvex penalized linear and logistic regression models with grouped predictors. \emph{Stat. Comput.} \textbf{25}, 173-87.

\item \hspace{-30pt} C{\scriptsize HEN}, J. H. \& C{\scriptsize
HEN}, Z. H. (2008). Extended Bayesian information criteria for
model selection with large model spaces. \emph{Biometrika}
\textbf{95}, 759-71.

\item \hspace{-30pt} C{\scriptsize OX}, D. R. \& R{\scriptsize
EID},  N. (2004). A note on pseudolikelihood constructed from
marginal densities. \emph{Biometrika} \textbf{91}, 729-37.

\item \hspace{-30pt} F{\scriptsize AN}, J. \& L{\scriptsize I}, R. (2001). Variable selection via nonconcave penalized likelihood and it oracle properties. \emph{J. Am. Statist. Assoc.} \textbf{96}, 1348-60.

\item \hspace{-30pt} F{\scriptsize AN}, J. \&
P{\scriptsize ENG}, H. (2004). Nonconcave penalized likelihood with a diverging number of parameters. \emph{Ann. Statist.} \textbf{32}, 928-61.

\item \hspace{-30pt} F{\scriptsize AN}, J. \&
L{\scriptsize V}, J. (2010). A selective overview of variable selection in high dimensional feature space. \emph{Stat. Sinica.} \textbf{20}, 101-148.

\item \hspace{-30pt} F{\scriptsize ANG}, Y., L{\scriptsize OPARO}, K. A. \&
F{\scriptsize ENG}, X. (1994) Inequalities for the trace of Matrix Product.
\emph{IEEE T. Automat. Contr.} \textbf{39}, 2489-2490.

\item \hspace{-30pt} F{\scriptsize OSTER}, D. P., \& G{\scriptsize EORGE}, E. I. 
 (1994) The risk inflation criterion for multiple regression.
\emph{Ann. Statist.} \textbf{22}, 1947-1975.


\item \hspace{-30pt} G{\scriptsize AO}, X. \& S{\scriptsize
ONG}, P. X.-K. (2010). Composite likelihood Bayesian information criteria for model
  selection in high-dimensional data. \emph{J. Am. Statist. Assoc.}
\textbf{105}, 1531--40.


\item \hspace{-30pt} G{\scriptsize ODAMBE}, V. P. (1960).
An optimum property of regular maximum likelihood estimation. \emph{Ann. Math. Statist.} \textbf{31}, 1208-11.

\item
\hspace{-30pt}
G{\scriptsize UO}, X., Z{\scriptsize HANG}, H., W{\scriptsize ANG}, Y. \& W{\scriptsize U}, J.  (2015). Model selection and estimation in high dimensional regression models with group \SCAD.  \emph{Stat. Probabil. Lett.} \textbf{103}, 86-92.

\item
\hspace{-30pt}
H{\scriptsize UANG}, J., B{\scriptsize REHENY}, P. \& M{\scriptsize A}, S. (2012). A selective review of group selection in high-dimensional models.  \emph{Statist. Sci.} \textbf{27}, 481-99.

\item
\hspace{-30pt}
I{\scriptsize WAMOTO}, T., B{\scriptsize IANCHINI}, G., B{\scriptsize OOSER}, D. \& Q{\scriptsize I}, Y. et al. (2011). Gene pathways associated with prognosis and chemotherapy sensitivity in molecular subtypes of breast cancer  \emph{J Natl Cancer Inst} \textbf{103}, 264-72.

\item
\hspace{-30pt}
J{\scriptsize OE}, H. \& L{\scriptsize EE}, Y. (2009). On weighting of bivariate margins in pairwise likelihood. \emph{J. Multivariate Anal.} \textbf{100}, 670-85.

\item
\hspace{-30pt}
K{\scriptsize IM}, Y.,  K{\scriptsize
WON},  S. \&  C{\scriptsize HOI},  H. (2012).
Consistent Model selection criteria on high dimensions. \emph{J. Mach. Learn. Res.} \textbf{13}, 1037-57.

\item
\hspace{-30pt}
K{\scriptsize WON}, S.,  \& K{\scriptsize IM},  Y.  (2012).
Large sample properties of the scad-penalized maximum likelihood estimation on high dimensions. \emph{Stat. Sinica} \textbf{22}, 629-53.

\item
\hspace{-30pt}
L{\scriptsize INDSAY}, B. G. (1988). Composite likelihood methods.
\emph{Statistical inference from stochastic processes},
 Ed. Prabhu, N. U., 221-239. Providence, RI: American Mathematical
 Society.

\item
\hspace{-30pt}
L{\scriptsize INDSAY}, B. G., Y{\scriptsize I}, G. Y. \& S{\scriptsize UN}, J. (2011). Issues and strategies in the selection of composite likelihoods.
\emph{Stat. Sinica} \textbf{21}, 71-105.

\item
\hspace{-30pt}
M{\scriptsize EIER}, L., V{\scriptsize AN DE GEER}, S. \& B{\scriptsize \"{U}HLMANN}, P.  (2008). The
group Lasso for logistic regression. \emph{J. R. Stat. Soc. Ser. B} \textbf{70}, 53–71.

\item
\hspace{-30pt}
N{\scriptsize ARDI}, Y. \& R{\scriptsize INALDO}, A.  (2008).  On the asymptotic properties of the group lasso estimator for linear models.  \emph{Electron. J. Stat.} \textbf{2}, 605–33.

\item \hspace{-30pt} R{\scriptsize IBATET}, M.,  C{\scriptsize
OOLEY}, D. \& D{\scriptsize AVISON}, A. C. (2012). Bayesian Inference from composite likelihood, with an application to spatial extremes.
\emph{Stat. Sinica} \textbf{22}, 813-45.

\item \hspace{-30pt}
S{\scriptsize POKOINY}, V. \& Z{\scriptsize HILOVA}, M. (2013). Sharp deviation bounds for quadratic forms. \emph{Math. Method. Statist.} \textbf{22}, 100-13.

\item \hspace{-30pt} T{\scriptsize IBSHIRANI}, R. (1996). Regression shrinkage and selection via the lasso. \emph{J. Royal. Statist. Soc. B} \textbf{58}, 267-88.

\item \hspace{-30pt} V{\scriptsize ARIN}, C. \& V{\scriptsize
IDONI}, P. (2005). A note on composite likelihood inference and
model selection. \emph{Biometrika} \textbf{92}, 519-28.

\item \hspace{-30pt} V{\scriptsize ARIN}, C. \& V{\scriptsize IDONI}, P. (2006). Pairwise likelihood inference for ordinal categorical time series. \emph{Comput. Statist. Data Anal.} \textbf{51}, 2365-73.

\item
\hspace{-30pt}
V{\scriptsize ARIN}, C. (2008).
On composite marginal likelihoods.
\emph{Adv. Statist. Anal.} \textbf{92}, 1-28.

\item
\hspace{-30pt}
W{\scriptsize ANG}, L., L{\scriptsize I}, H. \& H{\scriptsize UANG}, J. Z. (2008). Variable selection in nonparametric varying-coefficient models for analysis of repeated measurements. \emph{J. Am. Statist. Assoc.} \textbf{103}, 1556-69.

\item
\hspace{-30pt}
W{\scriptsize ANG}, Y., K{\scriptsize LIJN}, J. G., Z{\scriptsize HANG}, Y. \& S{\scriptsize IEUWERTS}, A. M. et al. (2005). Gene-expression profiles to predict distant metastasis of lymph-node-negative primary breast cancer. \emph{Lancet} \textbf{365}, 671-9.

\item
\hspace{-30pt}
W{\scriptsize HITE}, H. (1982). Maximum likelihood estimation of
misspecified models. \emph{Econometrika} \textbf{50}, 1-25.

\item
\hspace{-30pt}
X{\scriptsize U}, X. \& R{\scriptsize EID}, N. (2011).
On the robustness of maximum composite likelihood estimate.
\emph{J. Statist. Plann. Inference} \textbf{141}, 3047-54.

\item \hspace{-30pt} Y{\scriptsize I}, G. Y. \& T{\scriptsize
HOMPSON}, M. E.  (2005).
Marginal and association regression models for longitudinal binary data with drop-outs: A likelihood Approach. \emph{Can. J. Stat.}
\textbf{33}, 3-20.

\item
\hspace{-30pt}
Y{\scriptsize UAN}, M. \& L{\scriptsize IN}, Y. (2006).   Model selection and estimation
in regression with grouped variables. \emph{J. Royal.
Statist. Soc. B} \textbf{68},   49–67.

\item \hspace{-30pt} Z{\scriptsize HANG}, C.-H. (2010). Nearly unbiased variable selection under minimax concave penalty. \emph{Ann. Statist.} \textbf{38}, 894–942.

\item \hspace{-30pt} Z{\scriptsize HANG} Y. \& S{\scriptsize HEN},
X. (2010). Model selection procedure for high-dimensional data. \emph{Stat. Anal. Data Min.} \textbf{3}, 350-58.

\item
\hspace{-30pt}
Z{\scriptsize HAO}, P., R{\scriptsize OCHA}, G \& Y{\scriptsize U}, B (2009). The composite absolute penalties family for grouped and hierachical variable selection.  \emph{Ann. Statist.} \textbf{37}, 3468-97.

\end{description}


\end{document}